\newcommand{\E}{\mathbb{E}}
\newcommand{\mask}{0}
\newcounter{boxcounter}
\newcommand{\topk}{Top-\(K\) }
\newcommand{\prior}{p_{\textrm{prior}}}
\theoremstyle{plain}
\newtheorem{theorem}{Theorem}[section]
\newtheorem{proposition}[theorem]{Proposition}
\newtheorem{example}[theorem]{Example}
\newtheorem{conjecture}[theorem]{Conjecture}
\theoremstyle{definition}
\newtheorem{definition}[theorem]{Definition}
\newtheorem{assumption}[theorem]{Assumption}
\theoremstyle{remark}
\newcommand{\msg}{\textrm{MS}}
\newcounter{theo}
\renewcommand{\thetheo}{\arabic{theo}}
\newenvironment{theo}[2][]{%
\refstepcounter{theo}%
\ifstrempty{#1}%
{\mdfsetup{%
frametitle={%
\tikz[baseline=(current bounding box.east),outer sep=0pt]
\node[anchor=east,rectangle,fill=blue!20]
{\strut Theorem~\thetheo};}}
}%
{\mdfsetup{%
frametitle={%
\tikz[baseline=(current bounding box.east),outer sep=0pt]
\node[anchor=east,rectangle,fill=blue!20]
{\strut ~#1};}}%
}%
\mdfsetup{innertopmargin=10pt,linecolor=blue!20,%
linewidth=2pt,topline=true,%
frametitleaboveskip=\dimexpr-\ht\strutbox\relax
}
\begin{mdframed}[]\relax%
\label{#2}}{\end{mdframed}}
\newcommand{\loss}{\mathcal L}
\icmltitlerunning{Train for the Worst, Plan for the Best: Understanding Token Ordering in Masked Diffusions}
\begin{document}

\twocolumn[
\icmltitle{Train for the Worst, Plan for the Best: \\
Understanding Token Ordering in Masked Diffusions}

\icmlsetsymbol{equal}{*}

\begin{icmlauthorlist}
\icmlauthor{Jaeyeon Kim}{equal,yyy}
\icmlauthor{Kulin Shah}{equal,comp}
\icmlauthor{Vasilis Kontonis}{comp}
\icmlauthor{Sham Kakade}{yyy}
\icmlauthor{Sitan Chen}{yyy}
\end{icmlauthorlist}

\icmlaffiliation{yyy}{Harvard University}
\icmlaffiliation{comp}{University of Texas Austin}

\icmlcorrespondingauthor{Kulin Shah}{kulinshah@utexas.edu}
\icmlkeywords{Machine Learning, ICML}

\vskip 0.3in]

\printAffiliationsAndNotice{\icmlEqualContribution}

\begin{abstract}
In recent years, masked diffusion models (MDMs) have emerged as a promising alternative approach for generative modeling over discrete domains. Compared to autoregressive models (ARMs), MDMs trade off complexity at training time with flexibility at inference time. At training time, they must learn to solve an exponentially large number of infilling problems, but at inference time, they can decode tokens in essentially arbitrary order. In this work, we closely examine these two competing effects. On the training front, we theoretically and empirically demonstrate that MDMs indeed train on computationally intractable subproblems compared to their autoregressive counterparts. On the inference front, we show that a suitable strategy for adaptively choosing the token decoding order significantly enhances the capabilities of MDMs, allowing them to sidestep hard subproblems. On logic puzzles like Sudoku, we show that adaptive inference can boost solving accuracy in pretrained MDMs from $<7$\% to $\approx 90$\%, even outperforming ARMs with $7\times$ as many parameters and that were explicitly trained via teacher forcing to learn the right order of decoding. This shows that MDMs without knowledge of the correct token generation order during training and inference can outperform ARMs trained with knowledge of the correct token generation order. We also show the effectiveness of adaptive MDM inference on reasoning tasks such as coding and math on the 8B large language diffusion model (LLaDa 8B). 
\end{abstract}

\section{Introduction}

While diffusion models~\cite{ho2020denoising,song2021score} are now the dominant approach for generative modeling in continuous domains like image, video, and audio, efforts to extend this methodology to discrete domains like text and proteins~\cite{austin2023structured,lou2024discrete,hoogeboom2021argmax} remain nascent. Among numerous proposals, masked diffusion models (MDMs) \cite{lou2024discrete,sahoo2024simple,shi2025simplified} have emerged as a leading variant, distinguished by a simple and principled objective: to generate samples, learn to reverse a noise process which independently and randomly masks tokens.

In many applications, such as language modeling, masked diffusion models (MDMs) still underperform compared to autoregressive models (ARMs)~\cite{nie2024scaling,zheng2024maskeddiffusionmodelssecretly}, which instead learn to reverse a noise process that unmasks tokens sequentially from left to right. However, recent studies suggest that MDMs may offer advantages in areas where ARMs fall short, including reasoning \cite{nie2024scaling,kitouni2024factorization}, planning \cite{ye2024beyond}, and infilling \cite{gong2024scaling}. This raises a key question: what are the strengths and limitations of MDMs compared to ARMs, and on what type of tasks can MDMs be scaled to challenge the dominance of ARMs in discrete generative modeling?

To understand these questions, we turn a microscope to two key competing factors when weighing the merits of MDMs over ARMs:
\begin{itemize}[leftmargin=*,topsep=0pt,itemsep=0pt]
    \item \textbf{Complexity at training time}: MDMs face a more challenging training task by design. While ARMs predict the next token given an unmasked prefix, MDMs predict a token conditioned on a set of unmasked tokens in arbitrary positions. This inherently increases their training complexity.
    \item \textbf{Flexibility at inference time}: On the other hand, the sampling paths taken by an MDM are less rigid. Unlike the fixed left-to-right decoding of ARMs, MDMs decode tokens in random order at inference. Even more is possible: MDMs can be used to decode in \emph{any order} (including left-to-right). 
\end{itemize}
Therefore, we ask:
\begin{center}
    \emph{Are the benefits of inference flexibility for MDMs enough to outweigh the drawbacks of training complexity?}
\end{center}
In this work, we provide dual perspectives on this question.

\textbf{(1) Training for the worst.} \enspace First, we provide theoretical and empirical evidence that the overhead imposed by training complexity quantifiably impacts MDMs' performance. 

Theoretically, we show examples of simple data distributions with a natural left-to-right order, where ARMs can provably generate samples efficiently. In contrast, there are noise levels at which a large fraction of the corresponding subproblems solved by MDMs for these distributions are provably computationally intractable. Empirically, we validate this claim on real-world text data, known to have left-to-right order and show that the imbalance in training complexity across subproblems persists even in real-world text data (Fig.~\ref{fig:scaling_laws}, left). 

\paragraph{(2) Planning for the best.} While the above might appear to be bad news for MDMs, in the second part of this paper, we answer our guiding question in the affirmative by building upon the observation~\cite{chang2022maskgit, zheng2024reparameterized} that MDMs which can perfectly solve all masking subproblems can be used to decode in \emph{any} order.

In first part of the paper, we show that the imbalance in complexity across subproblems during the training of MDMs results in some of the subproblems being poorly trained and the vanilla MDM inference that unmasks tokens in random order results in evaluating the poorly trained marginals. Therefore, in place of vanilla MDM inference, we consider \emph{adaptive} strategies that carefully select which token to unmask next. 
Our key insight is that the adaptive strategies makes it possible to \emph{sidestep} the hard subproblems from training (Fig.~\ref{fig:main_fig}). In particular, we find that \textbf{even without modifying how MDMs are trained}, the resulting models' logits contain enough information to determine the right order in which to unmask. We show the effectiveness of the adaptive inference in solving logic puzzles, coding, math and infilling tasks. For example, on Sudoku puzzles, a simple adaptive strategy (Section~\ref{subsec:effective-design}) improves the accuracy of MDMs from $<7$\% to almost 90\%.

\paragraph{Advantage of MDMs over ARMs.} We show that the main effectiveness of MDMs lies in tasks that do not have the \emph{same} natural token generation order across all sequences (e.g., logic puzzles and reasoning tasks like coding and math). By carefully designing experiments on logic puzzles, we show that \textbf{MDMs without the knowledge of the correct token generation order during training and inference} can outperform \textbf{ARMs trained with the knowledge of the correct token generation order}. In particular, we show that MDMs that decide the correct token generation order during inference via adaptive strategies can outperform ARMs that are trained to learn the right token generation order via supervised teacher forcing~\cite{shah2024causal,lehnert2024beyond}.

\paragraph{Organization.} In Section~\ref{sec:2}, we provide preliminaries on MDMs and set notation. In Section~\ref{sec:hardness}, we 
examine MDM training and demonstrate the imbalance in computational intractability across subproblems. In Section~\ref{sec:inference}, we consider adaptive inference in MDMs and investigate its impact on likelihood modeling across various tasks.
 
\section{Masked Diffusion Models (MDM)} \label{sec:2}
In this section, we explain the framework of Masked Diffusion Models \cite{shi2025simplified,sahoo2024simple} and highlight its interpretation as an \emph{order-agnostic learner}. MDMs gradually add masking noise to the true discrete data and learn the marginal distribution of the induced reverse process. We formally define both the forward and reverse processes for MDMs below.

Let the distribution $p_{\rm{data}}$ on $\{1,\ldots,m\}^L$ be the data distribution over sequences of length $L$ and with vocabulary $\{1, \ldots, m\}$. We use $\mask$ to denote the ``mask'' token. 

\paragraph{Forward process.} For a given $x_0 \sim p_{\rm{data}}$ and a noise level $t \in [0,1]$, the forward process $x_t \sim q_{t|0}(\cdot \, | \, x_0)$ is a coordinate-independent masking process via $q_{t|0}(x_t | x_0) = \prod_{i=0}^{L-1} q_{t|0}(x_t^i | x_0^i)$,
where $$q_{t|0}(x_t^i \mid x_0^i) = \mathrm{Cat}\bigl(\alpha_t \mathbf{e}_{x_0^i} + (1-\alpha_t)\mathbf{e}_{\mask} \bigr).$$ Here, $\alpha_t$ is a predefined noise schedule satisfying $\alpha_0 \approx 1, \alpha_1 \approx 0$ and $\mathbf{e}_{x_0^i} \in \mathbb{R}^{m+1}$ is a one-hot vector corresponding to the value of token $x_0^i$. $\mathrm{Cat}(\pi)$ denotes the categorical distribution given by $\pi \in \Delta^{m}$. In other words, for each $i$-th coordinate, $x_t^i$ is masked to the mask token $\mask$ with probability $1-\alpha_t$ and remains unchanged otherwise.

\begin{figure}[t]
    \centering
    \includegraphics[width=0.47\textwidth]{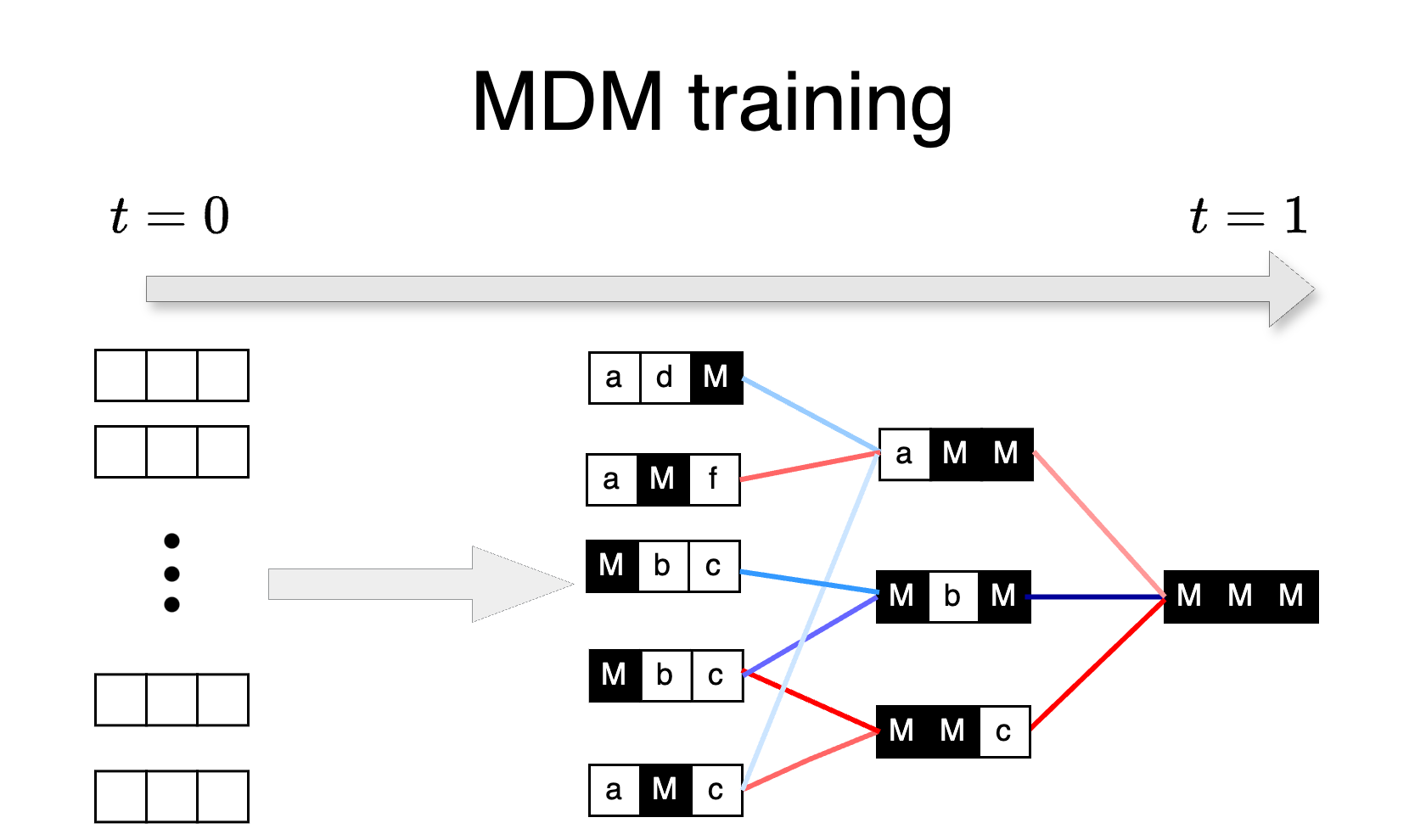}
    \vspace{-0.2in}
    \includegraphics[width=0.47\textwidth]{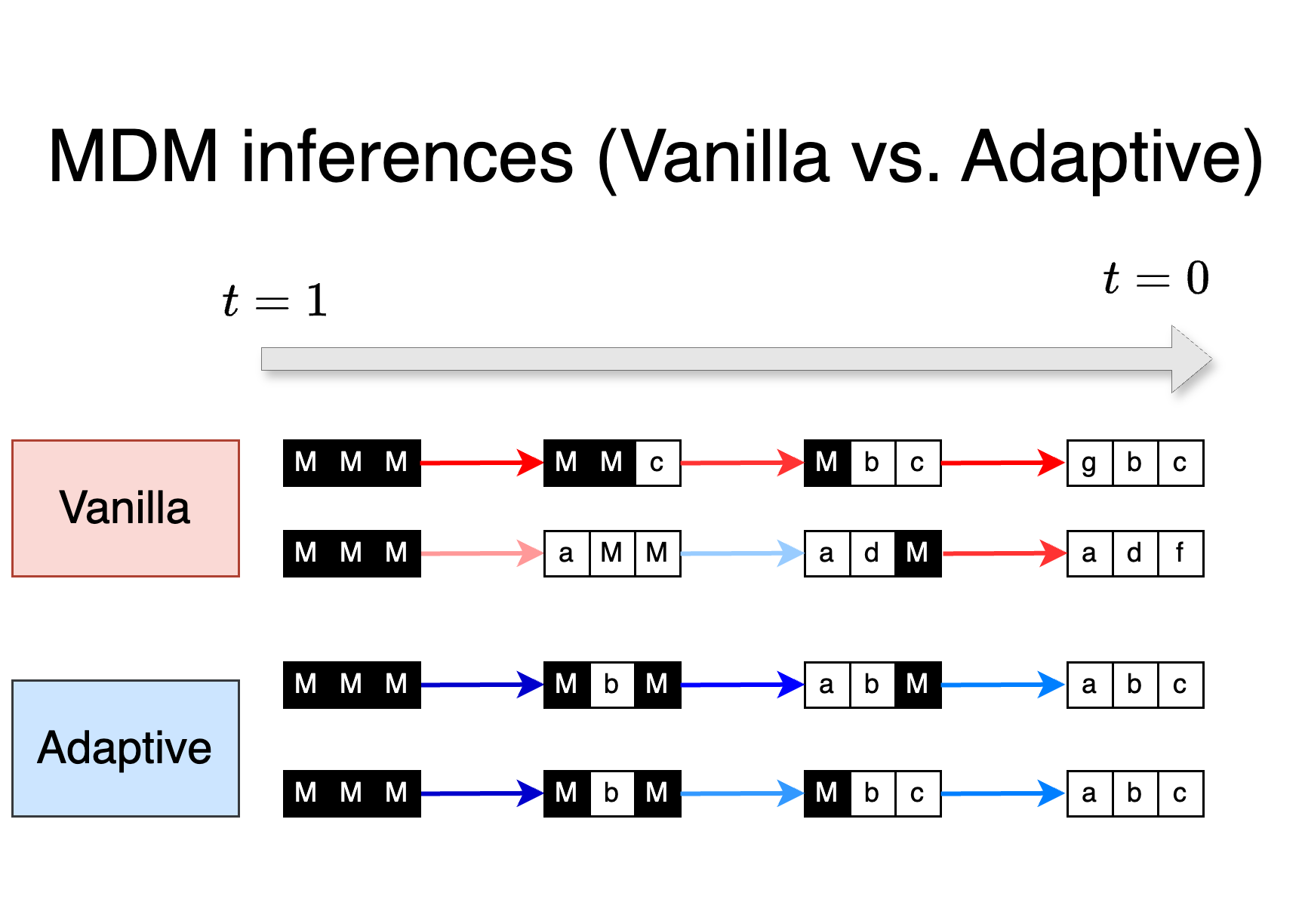}
    \caption{ 
    (\textbf{Top}) MDM training can be seen as learning multiple masked prediction problems, where some are harder to learn, leading to performance imbalance (Section~\ref{sec:hardness}). 
    (\textbf{Bottom}) During inference, adaptive MDM can avoid difficult problem instances, improving performance (Section~\ref{sec:inference}).}
    \label{fig:main_fig}
\end{figure}

\paragraph{Reverse process.} The reverse process of the above forward process is denoted by $q_{s|t}(x_s | x_t, x_0)$ and is given by $q_{s|t}(x_s | x_t, x_0) = \prod_{i=0}^{L-1} q_{s|t}(x_s^{i} | x_t, x_0)$ for any $s<t$, where
\begin{equation*}
     q_{s|t}(x_s^i \, \lvert\, x_t, x_0) = \begin{cases} 
       \mathrm{Cat}(\mathbf{e}_{x_t^{i}}) \quad & x_t^i \ne  0 \\
     \mathrm{Cat}\left(\frac{1-\alpha_s}{1-\alpha_t}\mathbf{e}_0 + \frac{\alpha_s - \alpha_t}{1-\alpha_t}\mathbf{e}_{x_0^i}\right)
  \quad &x_t^i= 0\,. 
     \end{cases}
\end{equation*}
The reverse transition probability $q_{s|t}(x_s^i | x_t, x_0)$ is approximated using $g_{\theta}(x_s^i | x_t) \triangleq q_{s|t}(x_s^i \, \lvert\, x_t, x_0 \leftarrow p_{\theta}(  \cdot | x_t, t) )$ where $p_\theta( \cdot | x_t,t)$ is a denoising network trained to predict the marginal distribution on $x_0^i$ via an ELBO-based loss for all masked tokens at noise scale $t$ (i.e., for all $i$ such that $x_t^i = 0$). To be precise, $q_{s|t} \left( x_s^i \mid x_t, x_0 \leftarrow p_{\theta}( \cdot | x_t, t) \right)$ indicates the conditional probability where $p_{\theta}(\cdot | x_t, t)$ is placed in the position of $e_{x_0^i}$ within $q_{s|t}(x_s^i \mid x_t, x_0)$. The denoising network is trained to minimize the following loss derived from the score-entropy \cite{lou2024discrete, sahoo2024simple, shi2025simplified, ou2024absorbing}:
\begin{equation*}
    \mathcal{L}_\theta = \int_{0}^1 \frac{\alpha_t'}{1-\alpha_t} \displaystyle \mathop{\mathbb{E}}_{ \substack{x_0 \sim p_{\rm data} \\ x_t \sim q_{t|0}(\cdot | x_0)}  }  
    \sum_{i: x_t^i = 0} -\log p_\theta(x_0^i | x_t,t)  dt,
\end{equation*}
where $\alpha_t'=\frac{d \alpha_t}{dt}$ and the summation is computed over masked tokens (i.e., all $i$ such that $x_t^i = \mask$). In practice, a time-embedding-free architecture for the denoising network, i.e., $p_\theta( \cdot | x_t, t) = p_\theta(\cdot | x_t)$ is generally used as \(x_t\) implicitly contains information about \(t\) via the number of masked tokens.

The reverse sampling process starts from the fully masked sentence $x_1 = (\mask,\ldots,\mask)$. Suppose we have a partially \textbackslash fully masked sequence \(x_t\) at a given noise level \(t \in (0,1]\). Then, to obtain $x_s$ for a predetermined noise level \(s < t\), we sample $x_s^i \sim g_\theta(\cdot | x_t)$ for all $i$. This process is repeated recursively from \(t=1\) to \(t=0\).

\subsection{Reformulating the training and inference of MDMs} \label{sec:agnostic_learner}

In this section, we first discuss training of MDMs and compare it with ``left-to-right" order training of autoregressive models in \cref{sec:vanilla-mdm-training}. Then, we reformulate vanilla MDM inference in \cref{sec:vanilla-mdm-inference} to set the stage for the upcoming discussion.

\subsubsection{Order-agnostic training of MDMs}
\label{sec:vanilla-mdm-training}

Recent works \cite{zheng2024maskeddiffusionmodelssecretly,ou2024absorbing} have observed that the learning problem of MDM is equivalent to a masked language model. Building upon their analysis, we reformulate the loss $\mathcal{L}_\theta$ to show that $\loss_{\theta}$ is a linear combination of the loss for all possible infilling masks. We first define \(x_0[M]\) as a masked sequence, obtained from original sequence $x_0$ where indices in the mask set $M$ (a subset of $[L]\triangleq\{1,2,\ldots,L\}$) are replaced with mask token $0$.

\begin{proposition} \label{prop:mdm_loss}
Assume $\alpha_0=1$, $\alpha_1 =0$ and denoising network $p_\theta$ is time-embedding free.
Then $ \mathcal{L}_\theta \le -\mathbb{E}_{x_0 \sim p_{\rm data}}[\log p_\theta(x_0)]$ and
\begin{equation} \label{eqn:mdm_loss}
\mathcal{L}_\theta = -\sum_{ M\subseteq [L],i \in M} \frac{1}{| M |} \frac{1}{\binom{L}{|M|}} \displaystyle \mathop{\mathbb{E}}_{x_0 \sim p_{\rm data}} [ \log p_\theta(x^i_0 | x_0[M]) ],
\end{equation}
where $|M|$ is the size of the set $M$ 
and \(p_\theta(x_i \mid x_0[M])\) indicates the conditional probability of the \(i\)-th coordinate from \(p_\theta(x_t)\).
\end{proposition}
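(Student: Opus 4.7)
My plan is to derive the identity (\ref{eqn:mdm_loss}) -- the main technical content -- and then observe that the inequality follows once $p_\theta(x_0)$ is interpreted as the natural order-agnostic likelihood implicit in the training objective.

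\textbf{Step 1 (identity).} The forward marginal $q_{t|0}(\cdot \mid x_0)$ factorizes coordinatewise with mask probability $1-\alpha_t$, so for fixed $x_0$ the mask set of $x_t$ equals any given $M \subseteq [L]$ with probability $(1-\alpha_t)^{|M|}\alpha_t^{L-|M|}$, and on that event $\delta_{x_t, \mask}\, \mathbf{e}_{x_0}^\intercal \log p_\theta(x_t, t) = \sum_{i \in M} \log p_\theta(x_0^i \mid x_0[M])$. Substituting into the definition of $\mathcal{L}_\theta$ and exchanging the (finite) sum over $M$ with the time integral gives
\begin{equation*}
\mathcal{L}_\theta = \sum_{M \subseteq [L],\, i \in M} \E_{x_0}[\log p_\theta(x_0^i \mid x_0[M])] \int_0^1 \alpha_t'(1-\alpha_t)^{|M|-1}\alpha_t^{L-|M|}\, dt.
\end{equation*}
The substitution $u = \alpha_t$ (so $du = \alpha_t'\, dt$, with limits $u:1 \to 0$) reduces the inner integral to the Beta integral $-\int_0^1 (1-u)^{|M|-1}u^{L-|M|}\, du = -B(L-|M|+1, |M|) = -\frac{(L-|M|)!(|M|-1)!}{L!} = -\frac{1}{L\binom{L-1}{|M|-1}}$, which plugs back to yield (\ref{eqn:mdm_loss}).

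\textbf{Step 2 (inequality).} Let $\sigma$ be a uniform random permutation of $[L]$ and set $M_j := \{\sigma(j), \ldots, \sigma(L)\}$. Define the order-$\sigma$ any-order autoregressive likelihood $p_\theta^\sigma(x_0) := \prod_{j=1}^L p_\theta(x_0^{\sigma(j)} \mid x_0[M_j])$. For uniform $\sigma$ and fixed $j$, the pair $(M_j, \sigma(j))$ is uniformly distributed over $\{(M, i) : |M| = L-j+1,\, i \in M\}$; since this set has $k\binom{L}{k} = L\binom{L-1}{k-1}$ elements for $k = L-j+1$, summing over $j$ gives
\begin{equation*}
\E_\sigma[\log p_\theta^\sigma(x_0)] = \frac{1}{L}\sum_{M,\, i \in M}\frac{1}{\binom{L-1}{|M|-1}}\log p_\theta(x_0^i \mid x_0[M]),
\end{equation*}
so (\ref{eqn:mdm_loss}) can be rewritten as $\mathcal{L}_\theta = -\E_{x_0}\E_\sigma[\log p_\theta^\sigma(x_0)]$. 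Taking $p_\theta(x_0)$ to be the order-agnostic likelihood $\log p_\theta(x_0) := \E_\sigma[\log p_\theta^\sigma(x_0)]$ -- symmetric across orderings, matching the symmetry of the training loss, and consistent with how MDM likelihoods are conventionally reported -- the identity specializes to $\mathcal{L}_\theta = -\E_{x_0}[\log p_\theta(x_0)]$, from which the stated inequality $\mathcal{L}_\theta \le -\E[\log p_\theta(x_0)]$ follows.

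\textbf{Main obstacle.} The substantive work is the combinatorial identification in Step 2: one must verify that as $\sigma$ ranges uniformly over $S_L$, the pair $(M_j, \sigma(j))$ ranges uniformly over $(M, i)$ pairs with $|M| = L-j+1$, and that summing the per-$j$ contributions reproduces exactly the $\binom{L-1}{|M|-1}^{-1}$ weights in (\ref{eqn:mdm_loss}). Everything else (the factorization of $q_{t|0}$, the Beta integral, the identification of $p_\theta(x_0)$) is routine.
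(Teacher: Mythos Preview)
Your Step~1 is correct and is in fact more self-contained than the paper's argument: the paper imports Proposition~3.1 of \cite{zheng2024maskeddiffusionmodelssecretly} (which has already carried out the time integral) and then does a combinatorial rewrite $\tfrac{1}{\binom{L}{|M|}}\cdot\tfrac{1}{|M|}=\tfrac{1}{L\binom{L-1}{|M|-1}}$, whereas you compute the $t$-integral directly via the Beta identity. Your $\pi$-learner reformulation in Step~2 is also correct and coincides with the paper's separate derivation of $\mathcal{L}_\theta=-\E_{\pi}\E_{x_0}\sum_i \log p_\theta(x_0^{\pi(i)}\mid x_0[\pi\{i,\dots,L-1\}])$.

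The gap is in how you dispatch the inequality. You \emph{define} $\log p_\theta(x_0):=\E_\sigma[\log p_\theta^\sigma(x_0)]$ and then read off equality. That is not admissible: (i) the quantity $\exp\bigl(\E_\sigma\log p_\theta^\sigma(x_0)\bigr)$ is a geometric mean of the order-specific likelihoods and in general does \emph{not} sum to $1$ over $x_0$, so it is not a probability model; (ii) $p_\theta(x_0)$ in the proposition is not a free symbol to be defined for convenience --- it is the marginal likelihood of the MDM generative model (the reverse process). In the one-token-at-a-time limit that marginal is the \emph{arithmetic} mixture $p_\theta(x_0)=\E_\sigma[p_\theta^\sigma(x_0)]$, and the comparison with $\mathcal{L}_\theta=-\E_{x_0}\E_\sigma[\log p_\theta^\sigma(x_0)]$ is then a one-line Jensen argument (concavity of $\log$), which yields
\[
-\E_{x_0}[\log p_\theta(x_0)] \;=\; -\E_{x_0}\log\E_\sigma[p_\theta^\sigma(x_0)] \;\le\; -\E_{x_0}\E_\sigma[\log p_\theta^\sigma(x_0)] \;=\; \mathcal{L}_\theta,
\]
i.e.\ the standard ELBO bound $\mathcal{L}_\theta\ge -\E[\log p_\theta(x_0)]$. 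Note that the paper's own proof in the appendix only establishes the identity and does not separately argue the inequality; the direction printed in the statement ($\le$) is inconsistent with the ELBO interpretation and appears to be a typo for $\ge$. Either way, your definitional shortcut does not supply the needed argument.
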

The proof of the above proposition is given in Appendix~\ref{appenix:mdm-equivalent-loss}. As the MDM loss is a linear combination of the loss for all possible infilling mask $M$, the minimizer of the loss $\loss_{\theta}$ learns to solve \emph{every} masking problem. In other words, the optimal predictor $p_\theta$ is the posterior marginal of the $i$-th token, conditioned on $x_0[M]$ for all masks $M$.

On the other hand, Autoregressive Models (ARMs) learn to predict $i^{\textrm{th}}$ token $x^i$ based on all preceding tokens, from $x^0$ to $x^{i-1}$. This is equivalent to predicting $x^i$ by masking positions from $i$ to $L-1$. Therefore, the training objective for ARMs can be expressed as:
\begin{equation} \label{eqn:ar_loss}
    \log p_\theta(x_0) = \sum_{i=0}^{L-1} \log p_\theta ( x_0^i | x_0 [\{i,\ldots,L-1\}]).
\end{equation}
Typically, ARMs are trained to predict tokens sequentially from left to right. We refer to this as left-to-right training. However, it's also possible to train these models to predict tokens sequentially based on a \emph{fixed, known} permutation of the sequence. We refer to this general approach as \textbf{order-aware training}.

To understand the comparison between the training objective of MDMs and ARMs, we want to highlight the equivalence between any-order autoregressive loss and MDM loss \cite{hoogeboom2022autoregressive, ou2024absorbing}. In particular, under conditions of Proposition~\ref{prop:mdm_loss}, MDM loss is equal to 

{\small
\begin{align*}
    \mathcal{L}_\theta =- \mathop{\E}_{\substack{ x_0 \sim p_{\textrm{data}} \\ \pi \sim \textrm{Unif}(\mathbb S_L) }} \left[\sum_{i=0}^{L-1} \log p_\theta \left( x_0^{\pi(i)} \Big| x_0 [\pi\{i,\ldots,L-1\}] \right) \right],
\end{align*}}

where $\textrm{Unif}(\mathbb S_L)$ is a uniform distribution over all the permutations of length $L$ (See Appendix~\ref{sec:mdm-aoarm} for the proof). Observe that if the expectation is only with respect to the identity permutation, then the loss becomes an autoregressive loss. This shows that MDM loss solves \emph{exponentially} more subproblems than ARM loss. In contrast to ARM loss, MDM does not prefer any particular (e.g., left-to-right) order during the training; therefore, we call its training \emph{order-agnostic training}.

\subsubsection{Order-agnostic inference of MDMs}
\label{sec:vanilla-mdm-inference}

The MDM inference can be decomposed into two steps: (a) randomly selecting a set of positions to unmask and (b) assigning token values to each position via the denoising network $p_\theta$. More precisely, we can reformulate the reverse process $x_s \sim g_\theta(\cdot | x_t)$ as follows. 

\begin{theo}[Vanilla MDM inference]{alg:random_sampler_redefine}

\begin{itemize}
    \item[(a)] Sample a set of masked tokens \(\mathcal{S} \subseteq \{i \mid x_t^i = \mask\}\), \(\mathbb{P}(i \in \mathcal{S}) = \frac{\alpha_s-\alpha_t}{1-\alpha_t}\).
    \item[(b)] For each $i \in \mathcal{S}$, sample $x_s^i \sim p_\theta(x^i | x_t)$.
\end{itemize}
\end{theo}
Therefore, the inference in MDM is implemented by randomly selecting $S$ and then filling each token value according to the posterior probability $p_{\theta}(x_s^i | x_t)$. 

On the other hand, ARMs are trained to predict tokens sequentially from left to right and therefore, generate tokens also in left-to-right order. In contrast, vanilla MDM inference generates the tokens in a random order. 

\section{MDMs train on hard problems}\label{sec:hardness}

\begin{figure*}[t]
    \centering
    \includegraphics[width=0.8\textwidth]{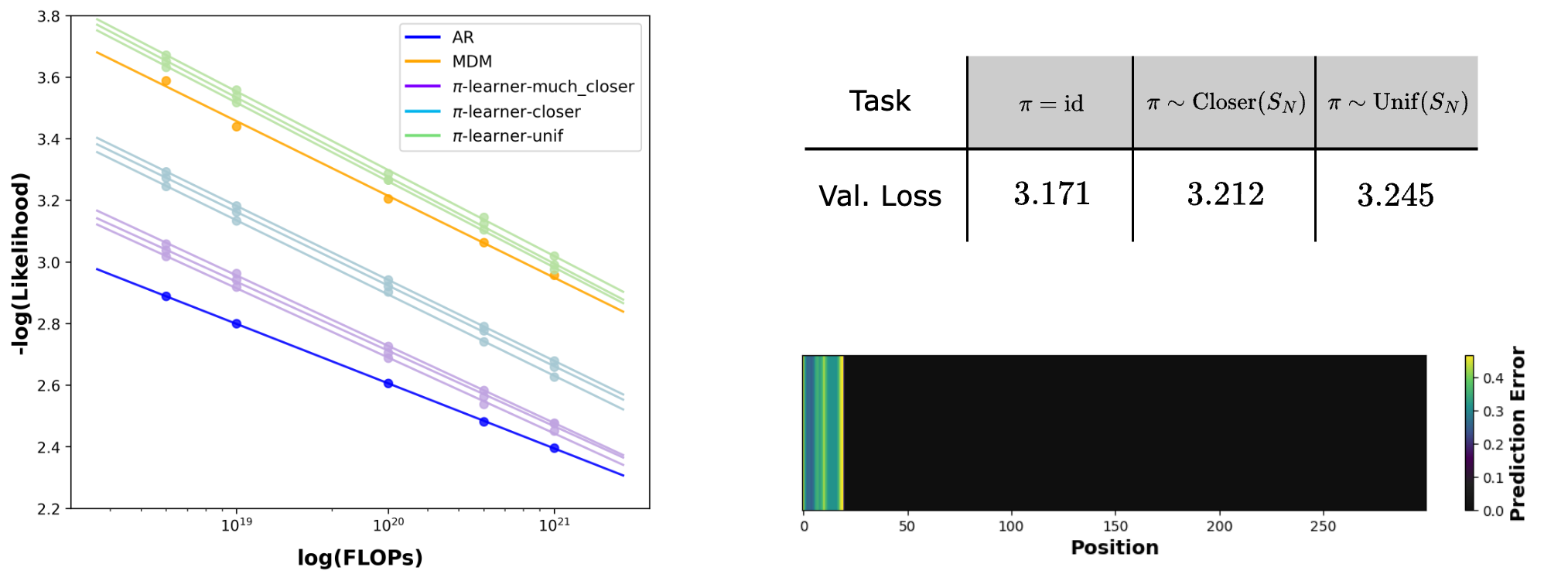} 
\caption{\textbf{Left: MDMs train on hard problems (Section~\ref{sec:hardness_text})}. x-axis and y-axis correspond to $\log(\text{FLOPs})$ and $-\log p_\theta(x)$, respectively. MDM {\color{blue} (Blue)} is worse than ARM {\color{orange} (Orange)} in likelihood modeling. Most masking problems {\color{purple} (Other lines)} that MDM is trained on are harder than those encountered by ARM, as indicated by small log-likelihoods. \textbf{Right: Task error imbalance (Section~\ref{sec:imbalance_error})}. MDM's performance varies across different tasks. For text data (top right), this is indicated by validation loss. For L\&O-NAE-SAT (bottom right), MDM performs well on the masking problems for observation positions (light region) but struggles with latent positions (dark region).}\label{fig:scaling_laws}
\end{figure*}

In this section, we provide theoretical and empirical evidence that when the data distribution has left-to-right order (or any fixed known order) then autoregressive training in left-to-right order (or in the known order) is more tractable than MDMs. In particular, for such distributions with fixed order, we show that ARMs can efficiently sample from the distributions but for MDMs, we theoretically and empirically demonstrate that a large portion of masking subproblems $p_\theta(x^i_0 \mid x_0[M])$ can be difficult to learn. 

In Section~\ref{sec:csp}, we show several examples of simple, non-pathological distributions for which: (1) the masking problems encountered during order-\emph{aware} training (such as in ARMs) are computationally tractable, yet (2) many of the ones encountered during order-agnostic training (such as in MDMs) are computationally intractable. In Section~\ref{sec:hardness_text}, we empirically show that text data also exhibits this gap between the computational complexity of order-aware and order-agnostic training and therefore, MDMs train on subproblems of wide variety of complexity (depending on the order/masks). In Section~\ref{sec:imbalance_error}, we empirically show that the variety in training complexity results in \underline{\textbf{performance imbalance across subproblems}}: MDMs trained on data from such distributions exhibits small errors on easy subproblems but suffers from large errors on harder ones.

\subsection{Benign distributions with hard masking problems} \label{sec:csp}

We now describe a simple model of data under which we explore the computational complexity of masking problems and show the contrast between masking problems encountered by MDMs and ARMs. 

\begin{definition}\label{definition:planted}
    A \emph{latents-and-observations (L\&O) distribution} is a data distribution $p_{\rm data}$ over sequence of length $L$ with alphabet size $m$ (precisely, $p_{\rm data}$ is over $\{0,\ldots,m\}^L$)  is specified by a permutation $\pi$ over indices $\{1, 2, \ldots, L \}$, number of latent tokens $N$, number of observation tokens $P$ such that $N + P = L$, prior distribution $\prior$ of latent variables over $\{1,\ldots,m\}$ and efficiently learnable \emph{observation functions} $\mathcal{O}_1,\ldots,\mathcal{O}_P: \{1,\ldots,m\}^N \to \Delta(\{0,\ldots,m\})$,\footnote{Here \emph{efficiently learnable} is in the standard PAC sense: given polynomially many examples of the form $(z,y)$ where $z\sim \prior^N$ and $y\sim \mathcal{O}_j(z)$, there is an efficient algorithm that can w.h.p. learn to approximate $\mathcal{O}_j$ in expectation over $\prior^N$.} 
    \begin{itemize}[topsep=0pt,itemsep=0pt,leftmargin=*]
        \item (\textbf{Latent tokens}) For $i = 1,\ldots,N$, sample $x^{\pi(i)}$ 
        independently from the prior distribution $\prior$ of the latents.
        \item (\textbf{Observation tokens}) For $j = 1,\ldots,P$, sample $x^{\pi(N + j)}$ independently from $\mathcal{O}_j(x^{\pi(1)},\ldots,x^{\pi(N)})$.
    \end{itemize} 
\end{definition}

L\&O distributions contain two types of tokens: (1) \emph{latent tokens} and (2) \emph{observation tokens}. Intuitively, latent tokens are tokens in the sequence, indexed by $\pi(1), \pi(2), \ldots, \pi(N)$ that serve as ``seeds" that provide randomness in the sequence; the remaining tokens, called observation tokens (indexed by $\pi(N+1), \pi(N+2), \ldots, \pi(N+P)$), are determined as (possibly randomized) functions of the latent tokens via $\mathcal{O}_1,\ldots,\mathcal{O}_P$.  Observe that L\&O distributions specified by a permutation $\pi$ have a natural generation order by permutation $\pi$.

\paragraph{Order-aware training} Order-aware training, i.e. by permuting the sequence so that $\pi$ becomes the identity permutation and then performing autoregressive training, is computationally tractable: predicting $x^{\pi(i)}$ given $x^{\pi(1)},\ldots,x^{\pi(i-1)}$ is trivial when $i \le N$ as the tokens are independent, and computationally tractable when $i > N$ because $x^{\pi(i)}$ only depends on $x^{\pi(1)},\ldots,x^{\pi(N)}$ and is efficiently learnable by assumption. In contrast, below we will show examples where if one performs order-agnostic training \emph{à la} MDMs, one will run into hard masking problems with high probability.

\paragraph{Order-agnostic training} We first note that if the observations $(\mathcal{O}_1,\ldots,\mathcal{O}_P)$ are given by a cryptographic hash function, then the masking problem of predicting $(x^{\pi(1)},\ldots,x^{\pi(L)})$ given $(x^{\pi(N+1)},\ldots,x^{\pi(N+P)})$ is computationally intractable by design because it requires inverting the hash function. While this is a well-known folklore observation regarding the role of token ordering in language modeling, it is not entirely satisfying because this construction is worst-case in nature \--- in real-world data, one rarely trains on sequences given by cryptographic hash functions. Furthermore, it only establishes hardness for a specific masking pattern which need not be encountered in the course of running the reverse process.

We provide several simple instances of L\&O distributions that address these issues: instead of leveraging delicate cryptographic constructions, they are \emph{average-case} in nature and furthermore we can establish hardness for \emph{typical} masking problems encountered along the reverse process. 

In all these examples, the hardness results we establish hold even if the algorithm knows all of the parameters of $p_{\rm data}$ as well as the observation functions $\mathcal{O}_1,\ldots,\mathcal{O}_P$. Due to space constraints, here we focus on the following example, deferring two others to Apps.~\ref{app:parity} and~\ref{app:slab}.

\begin{example}[Sparse predicate observations]\label{example:csp}
    Consider the following class of L\&O distributions. Given \emph{arity} $k\ge 2$, fix a \emph{predicate} function $g: \{1,\ldots,m\}^k \to \{0,1\}$. Consider 
    the set of all ordered subsets of $\{1,2,\ldots,N\}$ of size $k$ and set the total number of observation latents $P$ equal to the size of this set (hence $P = N ! / (N-k)! = N(N-1)\cdots(N-k+1)$). To sample a new sequence, we first sample latent tokens $x^{\pi(1)},\ldots,x^{\pi(N)}$ from the prior distribution $\prior$ and an observation latent corresponding to a $k$-sized subset $S$ is given by $g( \{ x^{\pi(i)} \}_{i \in S} )$. In other words, each observation latent corresponds to a $k$-sized subset $S$ of $\{1,2,\ldots,N\}$ and the corresponding observation function $\mathcal{O}_S(x^{\pi(1)}, \ldots, x^{\pi(N)} )$ is given by $g( \{ x^{\pi(i)} \}_{i \in S} )$.
\end{example}

\begin{proposition}\label{prop:csp}
    Let $x$ be a sample from an L\&O distribution $p_{\rm data}$ with sparse predicate observations as defined in Example~\ref{example:csp}, with arity $k$ and predicate $g$ satisfying Assumption~\ref{assume:paramagnetic}, and let $\gamma$ be the probability that $g$ is satisfied by a random assignment from $\{1,\ldots,m\}^k$. Let $D_{\rm KS}$ and $D_{\rm cond}$ be some constants associated with the predicate function $g$ (see Definition~\ref{def:thresholds}). Suppose each token in $x$ is independently masked with probability $\alpha$, and $M$ is the set of indices for the masked tokens. If $1 - \gamma^{-1} D_{\rm KS}/kN^{k-1} \le \alpha \le 1 - \gamma^{-1} D_{\rm cond}/kN^{k-1}$, then under the \emph{1RSB cavity prediction} (see Conjecture~\ref{conj:1rsb}), with probability $\Omega_k(1)$ over the randomness of the masking, no polynomial-time algorithm can solve the resulting subproblem of predicting any of the masked tokens among $x^{\pi(1)},\ldots,x^{\pi(N)}$ given $x[M]$.
\end{proposition}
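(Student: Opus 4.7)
The plan is to identify the masked prediction task with recovering a planted assignment in a sparse random $k$-CSP whose clauses encode $g$, and then invoke the 1RSB cavity prediction (Conjecture~\ref{conj:1rsb}) in its computationally hard phase. First I would encode the surviving data as a planted CSP instance: each of the $P = N(N-1)\cdots(N-k+1)$ observation tokens corresponds to a distinct ordered $k$-subset $S$ of latents, records the label $g(\{x^{\pi(i)}\}_{i\in S})$, and is revealed with probability $1-\alpha$. Rewriting each $g=0$ observation as a flipped $g=1$ clause on the same subset yields a sparse planted random $k$-CSP whose posterior matches the posterior over the masked latents. A direct counting argument shows that the expected number of $g=1$ clauses consistent with the planted assignment is $\gamma(1-\alpha)P$, giving effective clause density of order $\gamma(1-\alpha)N^{k-1}$ per variable. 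The assumed window on $\alpha$ places this density inside $[D_{\rm cond}, D_{\rm KS}]$ up to the $k$-dependent normalization recorded in Definition~\ref{def:thresholds}.

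Next, under Assumption~\ref{assume:paramagnetic}, Conjecture~\ref{conj:1rsb} says that in the density window $[D_{\rm cond}, D_{\rm KS}]$ the posterior over latents is \emph{condensed}: its mass concentrates on an $O(1)$ number of extremal clusters, the planted cluster carries vanishing posterior weight, and the Bayes-optimal single-coordinate marginal collapses to the prior $\prior$. The cavity prediction further conjectures that no polynomial-time algorithm can attain a non-trivial advantage over sampling from $\prior$ at any individual variable. Transporting this hardness through the reduction of the previous paragraph yields the stated intractability for any masked latent $x^{\pi(i)}$. The $\Omega_k(1)$ probability over the masking follows from a Chernoff-type concentration argument: the retained clause count lands inside the hard window with constant probability as soon as its expectation sits in that window.

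Two subtleties warrant care. A $(1-\alpha)$-fraction of latents is itself unmasked, but since $1-\alpha = O(N^{-(k-1)})$ only $O(N^{2-k})$ latents are revealed in expectation, so for $k\ge 2$ the induced sub-CSP on the remaining latents has essentially unchanged density and stays in the hard window. Similarly, the $g=0$ observations are absorbed by flipping the predicate on the affected subsets, with Assumption~\ref{assume:paramagnetic} ensuring that the combined ensemble remains within the 1RSB framework. The main obstacle is this final step: verifying that the enriched ensemble --- in which all $P$ ordered $k$-subsets are potentially observed, both predicate labels are visible, and some latents are partially revealed --- is contiguous to the canonical planted $k$-CSP ensemble on which Conjecture~\ref{conj:1rsb} is posed, so that the cavity-prediction hardness applies verbatim. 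Once this contiguity is established, the remainder of the argument is bookkeeping of the threshold constants.
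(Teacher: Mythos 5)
Your overall route is the same as the paper's: condition on a favorable masking event, identify the surviving observations with a planted CSP at average degree inside $(D_{\rm cond}, D_{\rm KS})$ (the density bookkeeping via $\gamma(1-\alpha)P$ matches the paper's), invoke Conjecture~\ref{conj:1rsb}, and get the $\Omega_k(1)$ probability from binomial concentration of the number of revealed observations. But your write-up stops short where the paper actually concludes. The step you flag as ``the main obstacle'' --- relating the ensemble with both predicate labels visible and a few latents revealed to the canonical planted ensemble of Definition~\ref{def:plantedcsp} --- is not resolved by any contiguity argument in the paper either; instead the paper avoids part of it by conditioning on the event that \emph{all} latent tokens are masked, which has probability $\alpha^N \ge (1-\gamma^{-1}D_{\rm KS}/kN^{k-1})^N = \Omega_k(1)$ precisely because $1-\alpha = O(N^{-(k-1)})$, and then, conditioned on the revealed-observation count landing in $[\gamma^{-1}D_{\rm cond}N/k,\ \gamma^{-1}D_{\rm KS}N/k]$ (with a $\gamma$ fraction of them satisfied), declares the masking problem to be exactly planted-CSP inference at average degree in the hard window. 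Your alternative of keeping the $O(N^{2-k})$ revealed latents and ``flipping'' the $g=0$ observations is a reasonable sketch, but as you yourself note it is not carried to completion, so as written the proposal does not establish the proposition.

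A second, substantive error: your description of the condensed phase is wrong and at odds with Conjecture~\ref{conj:1rsb} as the paper states it. The conjecture asserts a \emph{computational--statistical gap}: for $D_{\rm cond} < kP/N < D_{\rm KS}$ the best overlap achievable by efficient algorithms is strictly smaller than the information-theoretically best overlap, which in particular is nontrivial in this regime. Your claim that ``the Bayes-optimal single-coordinate marginal collapses to the prior $\prior$'' would mean the prediction task is information-theoretically impossible, in which case no conjecture would be needed and the statement would not be one about polynomial-time algorithms at all. Since this mischaracterization is exactly the content you feed into the hardness step, it needs to be corrected: the argument must conclude computational hardness of attaining the (nontrivial) Bayes-optimal prediction, which is what the paper extracts from the conjecture after the reduction.
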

The complete proof of the proposition is given in \Cref{app:planted_result}. We also provide a proof outline in \Cref{appendix:pf_outline_hardness} for a comprehensive understanding.

\subsection{Empirical evidence of hardness via likelihoods}
\label{sec:hardness_text}

In the previous section, we provided theoretical evidence that order-aware training is tractable when data has a natural order but the order-agnostic training is not. In this section, we provide empirical evidence to support this claim, using natural text data. Additionally, recent studies \cite{nie2024scaling, zheng2024maskeddiffusionmodelssecretly} have shown that masked diffusion models (MDMs) underperform compared to autoregressive models (ARMs) on natural text data. In this section, we provide evidence that this performance gap is primarily due to the order-agnostic training of MDMs. Natural text inherently follows a left-to-right token order, and we show that as training deviates from this order, model performance progressively declines.

To understand the importance of the order during the training, we use the following setting: Given a permutation $\pi$ of indices $\{0,1, \ldots, L-1 \}$, define a \emph{$\pi$-learner} to be a likelihood model $\log p_{\theta}(x_0)$ given as follows: 
\begin{equation}
\label{eq:pi-learner-likelihood}
    \log p_{\theta}(x_0) = \sum_{i=0}^{L-1} \log p_\theta \bigl( x_0^{\pi(i)} \Big| x_0 [\pi\{i,\ldots,L-1\}] \bigr)\,
\end{equation}
In other words, the $\pi$-learner predicts the token at position $\pi(i)$ given the clean tokens $x_0^{\pi(0)},\ldots, x_0^{\pi(i-1)}$ and masked tokens $x_0^{\pi(i)},\ldots, x_0^{\pi(L-1)}$. If $\pi$ is the identity permutation, this reduces to the standard (left-to-right) autoregressive training. Note that the MDM loss encodes a $\pi$-learner for every permutation $\pi$ because
the MDM loss~\eqref{eqn:mdm_loss} is equivalent to the average loss of those $\pi$-learners over $\pi$ sampled from $\mathrm{Unif}(\mathbb{S}_L)$:

{\small
\begin{align*}
    \mathcal{L}_\theta =- \mathop{\E}_{\substack{ x_0 \sim p_{\textrm{data}} \\ \pi \sim \textrm{Unif}(\mathbb S_L) }} \left[\sum_{i=0}^{L-1} \log p_\theta \left( x_0^{\pi(i)} \Big| x_0 [\pi\{i,\ldots,L-1\}] \right) \right],
\end{align*}}

where $\mathbb{S}_L$ denotes the set of all permutations over $\{0, 1, \ldots, L-1\}$. The proof of the above equivalence is given in \cref{appenix:mdm-equivalent-loss}. Therefore, by measuring the `hardness' of each $\pi$-learner, we can probe differences in hardness between arbitrary masking problems and left-to-right masking problems.

\paragraph{Experimental setup.} We use the Slimpajama dataset \cite{soboleva2023slimpajama} to evaluate the performance of training in different orders. To train a $\pi$-learner, we employ a transformer with causal attention and use permuted data $\pi(x_0)$ as input. By varying $\pi$ while maintaining all other training configurations (e.g., model, optimization), we can use the resulting likelihood (computed using \cref{eq:pi-learner-likelihood}) as a metric to capture the hardness of subproblems solved by the $\pi$-learner.

In our experiments, the sequence length $L$ is $2048$, so repeating the scaling laws for each $\pi$ is infeasible. Instead, we sample $\pi \sim \mathrm{Unif}(\mathbb{S}_L)$ and examine the scaling law of the $\pi$-learner's likelihood. We leverage the codebase from \cite{nie2024scaling}, where the baseline scaling laws of MDM and ARM were introduced. Moreover, given that RoPE has an inductive bias towards left-to-right ordering, we employ a learnable positional embedding layer for all experiments to correct this. Consequently, we also re-run the baseline results, where RoPE was employed. To investigate how the distance between $\pi$ and the identity permutation affects the scaling law, we consider two interpolating distributions over permutations between $\mathrm{Unif}(\mathbb{S}_L)$ (i.e, MDM training) and the point mass at the identical permutation (i.e, ARM training). We sample three permutations from the interpolating distribution and $\mathrm{Unif}(\mathbb{S}_L)$ and plot the scaling law for each of the permutation. Due to space constraints, we provide further experimental details in Appendix~\ref{appendix:exp_detail_text}.

\paragraph{Results.} As shown in Fig.~\ref{fig:scaling_laws}, the scaling law for a $\pi$-learner with uniformly random $\pi$ is worse than that of an ARM. This elucidates the inherent hardness of masking problems \( p_\theta(x_i \mid x_0[M]) \) beyond left-to-right prediction and also explains why MDM, which is trained simultaneously on all $\pi \in \mathbb{S}_L$, is worse than ARM in likelihood modeling. Additionally, as $\pi$ gets closer to the identity permutation, the scaling laws also get closer to ARM ($\pi$-learner-closer and $\pi$-learner-much-closer in Fig.~\ref{fig:scaling_laws}). This also supports the common belief that ARM is a good fit for text data as it inherently follows a \emph{left-to-right} ordering.

That said, it should also be noted that even though MDMs are trained on exponentially more masking problems than ARM ($\Theta(L2^L)$ versus $L$), its performance is not significantly worse than $\pi$-learners. We attribute this to the \emph{blessing of task diversity};  multi-task training can benefit both the optimization dynamics \cite{kim2024task} and validation performance \cite{tripuraneni2022provablem,andreas2016benefit,ruder2017overview} due to positive transfers across tasks.

\subsection{Error is imbalanced across masking problems}
\label{sec:imbalance_error}
In previous sections, we have demonstrated that the hardness of different masking problems \( p_\theta(x^i \mid x_0[M]) \) can vary significantly, potentially hindering the MDM's learning. In this section, we provide empirical evidence that the MDM's final performance exhibits a similar imbalance across subproblems. Details are provided in App.~\ref{appendix:exp_detail_3_3}.

\paragraph{L\&O-NAE-SAT.}
Consider an L\&O distribution with $\pi$ given by the identity permutation and where each observation $\mathcal{O}_j$ is deterministically given by $\mathrm{NAE}(x_{i_1},x_{i_2},x_{i_3}) \triangleq 1 - \mathbf{1}[x_{i_1} = x_{i_2} = x_{i_3}]$ for some randomly chosen (prefixed) triples $(i_1,i_2,i_3) \in[N]$. For an MDM trained on this distribution, we measure the error it achieves on each task $\log p_\theta(x_0 | x_0[M])$ via $ \mathbb{E}_{x_0} \Bigl \| \log p_\theta(x_0 | x_0[M])-  \log p_{\rm data}(x_0 | x_0[M]) \Bigr\|^2$,
where $p_{\rm data}(x_0 | x_0[M])$ denotes the Bayes-optimal predictor.
Technically, we do not have access to this, so instead we train another MDM for a much larger number of iterations and use this as a proxy. Fig.~\ref{fig:scaling_laws} reveals that prediction tasks for latent positions (light region) exhibit larger errors compared to those for observation positions (dark region). 
 
\paragraph{Text.}
Here we revisit the text experiment from Section~\ref{sec:hardness_text}. Since we do not have access to the Bayes-optimal predictor, we use the metric
{\small
$
    \mathbb{E}_{x_0 \sim p_{\rm{data}}}\left[\sum_{i=0}^{L-1} \log p_\theta \left( x_0^{\pi(i)} \Big| x_0 [\pi\{i,\ldots,L-1\}] \right) \right]
$}. This captures the accumulation of error across subproblems $p_\theta \left( x_0^{\pi(i)} \Big| x_0 [\pi\{i,\ldots,L-1\}] \right)$, since $p_\theta(x_0 | x_0[M]) = p_{\rm{data}}(x_0 | x_0[M])$ minimizes this metric. Fig.~\ref{fig:scaling_laws} shows a clear gap between different subproblems.

The theoretical and empirical evidence demonstrates that MDMs perform better in estimating $p_{\theta}(x_0 | x_0[M])$ for some subproblems $M$ than for others. We therefore want to avoid encountering hard subproblems $M$ at inference time. In the next section, we show that while vanilla MDM inference can run into such subproblems, simple modifications at the inference stage can effectively circumvent these issues, resulting in dramatic, \emph{training-free} performance improvements.

\section{MDMs can plan around hard problems} \label{sec:inference}
We previously argued that due to the complex nature of masking subproblems, MDM must perform poorly on certain ones $p_\theta(x^i | x_t)$. Therefore, during vanilla MDM inference,
MDM inevitably encounters such difficult subproblems at Step (b). While this might suggest that we need to fundamentally revisit how MDMs are trained, in this section we show that, surprisingly, simple modifications at the inference stage—\emph{without any further training}—can sidestep these issues and lead to significant performance improvements.

\paragraph{MDM offers multiple sampling paths.}
The vanilla MDM inference (Algorithm~\ref{alg:random_sampler_redefine}) aim to align the intermediate distributions with the forward process, as used in continuous diffusion. However, unlike continuous diffusion, the reverse process of MDM allows multiple valid sampling paths (different orders of unmasking the tokens) that match the starting distribution of the forward process of MDM. 

We first show that when we have an ideal MDM that perfectly solves all masking problems, i.e., $p_\theta(x_0^i | x_0[M]) = p_{\rm{data}}(x_0^i | x_0[M])$, then using any sampling path (unmasking the tokens in any order) results in the same distribution. Consider the following sampler: For every step, $S$ is a set with one index selected agnostically (without following any distribution). For any clean sample $x_0$ generated by this sampler, note that $p_\theta(x_0) = \prod_{i=0}^{L-1}  p_\theta \left( x_0^{\pi(i)} \Big| x_0 [\pi\{i,\ldots,L-1\}] \right)$ by chain rule, and this is equal to $\prod_{i=0}^{L-1}  p_{\rm{data}} \left( x_0^{\pi(i)} \Big| x_0 [\pi\{i,\ldots,L-1\}] \right) = p_{\rm{data}}(x_0)$.
Therefore, other choices of $S$, not necessarily following Algorithm~\ref{alg:random_sampler_redefine}, still capture the true likelihood.

\begin{figure}[t]
    \centering
    \includegraphics[width=0.9\linewidth]{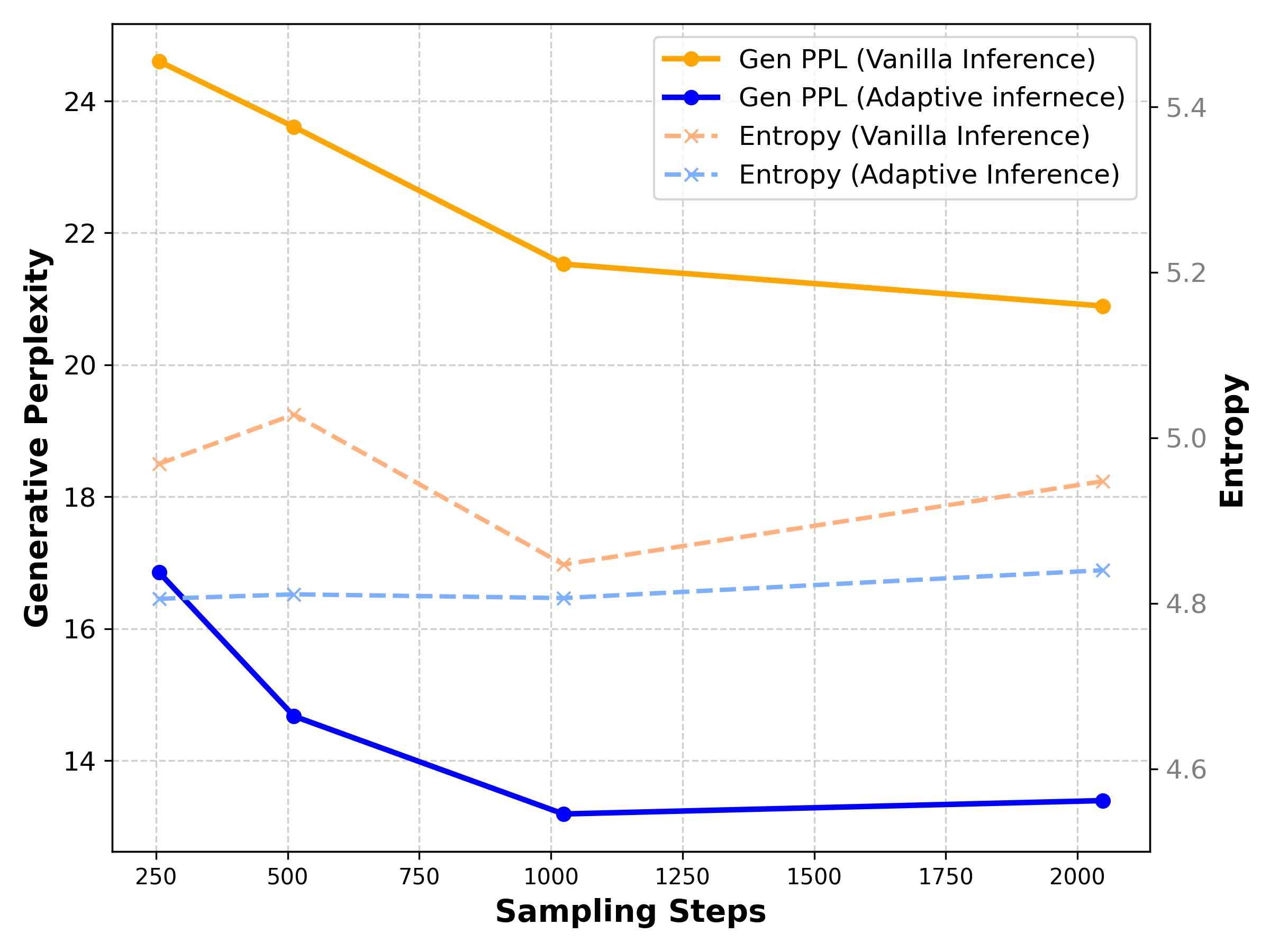}
    \vspace{-0.1in}
    \caption{\textbf{Generative Perplexity.} We compare the resulting generative perplexity (GenPPL) of adaptive vs. vanilla MDM inference. We employ a pretrained $170$M MDM and LLaMA-$7$B \cite{touvron2023llama2} as inference and evaluation, respectively. Adaptive MDM inference {(\color{blue} Blue)} leads to a substantial reduction in generative perplexity, while maintaining the entropy.}
    \label{fig:genppl}
\end{figure}

In practice, unlike this ideal case, MDM does not perform equally well on all subproblems, as shown in Section~\ref{sec:imbalance_error}. Consequently, different sampling paths result in varying likelihood modeling abilities. Motivated by this observation, we consider \emph{adaptive inference for MDMs}:

\vspace{0.05in}
\begin{theo}[Adaptive MDM inference]{alg:sampling_adaptive}
\vspace{-0.1in}
\begin{itemize}
    \item[(a)] Sample a set of masked tokens $\mathcal{S} = \mathcal{F}\left( \theta, x_t \right) \subseteq \{i \mid x_t^i = \mask\}$.
    \item[(b)] For each $i \in \mathcal{S}$, sample $x_s^i \sim p_\theta(x^i | x_t)$.
\end{itemize}
\end{theo}

Instead of selecting $S$ randomly, adaptive MDM inference leverages an oracle $\mathcal{F}(\theta, x_t)$ to select $S$ strategically to avoid hard masking problems. This naturally raises the question of how to design an effective oracle $\mathcal{F}$.

In the following sections, we demonstrate that adaptive MDM inference with careful choices of $\mathcal{F}$ enhance MDM's likelihood matching ability. In other words, a pretrained MDM, even if it performs poorly on certain hard subproblems, \emph{still contains sufficient information to avoid them} when paired with an effective oracle $\mathcal{F}$.

\subsection{Effective design of ordering oracle}
\label{subsec:effective-design}
We introduce two different oracles, Top probability and Top probability margin. Intuitively, both strategies are based on the idea that $S$ should be selected based on how ``certain'' the model is about each position. We caution that these strategies should not be confused with notions like nucleus sampling in ARMs~\cite{holtzman2019curious}; the oracles we describe are for selecting the \emph{position} of the next token to decode, rather than the \emph{value}, and thus are only meaningful in the context of MDMs.

\begin{table}[h]
    \centering 
    \caption{\textbf{L\&O-NAE-SAT}. Adaptive MDM inference achieves better likelihood matching than vanilla MDM inference. Note that naive guessing leads to $75\%$ accuracy, indicating that vanilla inference performs similarly or worse than naive guessing.}
    \vspace{0.1in}
    \begin{tabular}{c c c}
        \toprule
        \textbf{($N,P$)} & \textbf{Vanilla inference} & \textbf{Adaptive inference}\\
        \midrule
        $(25,275)$  & 78.06\%   & 93.76\%  \\
        $(30,270)$  & 75.70\% & 93.54\% \\
        $(40,260)$  & 74.60\%   & 92.21\%  \\
        $(50,250)$  & 67.94\% & 90.01\% \\
        $(100,200)$ & 62.84\% & 88.91\% \\
        \bottomrule
    \end{tabular}
    \label{tab:csp_sampler}
\end{table}

\paragraph{Top probability~\cite{zheng2024reparameterized}.} Suppose we want to unmask $K$ positions at time step $t$, i.e., select $|S|=K$. In the top probability, the uncertainty of a position is estimated by the maximum probability assigned to any value in the vocabulary. More precisely, the certainty at position $i$ is $\max_{j \in \{ 0, \ldots, m-1 \} } p_\theta(x^i = j | x_t)$ and \(\mathcal{F}(\theta, x_t) = \text{Top } K \left(\max p_\theta(x^i | x_t) \right)\). 

Top probability strategy is a good proxy for many tasks and works well in practice \cite{zheng2024reparameterized,ye2024beyond,wang2024diffusion}. However, this approach can often provide misleading estimates of uncertainty. Consider when an MDM is confused between two token values, thus assigning them almost equal but high probabilities.  In this case, unmasking according to top probability may still choose to unmask this position, despite its uncertainty. To mitigate this issue, we propose the following alternative strategy.

\paragraph{Top probability margin.} In this strategy, the uncertainty of a position is instead estimated using the absolute difference between the two most probable values at position $i$. More precisely, if $j_1$ and $j_2$ are the two most probable values in vocabulary according to $p_\theta(x^i | x_t)$ in position $i$, the certainty in the position is given by $| p_\theta(x^i = j_1 | x_t) - p_\theta(x^i = j_2 | x_t) |$  and \(\mathcal{F}(\theta, x_t) = \text{Top } K \left(| p_\theta(x^i = j_1 | x_t) - p_\theta(x^i = j_2 | x_t) | \right)\). When multiple values have similar probabilities at a position, top probability margin strategy will provide a better estimate of the uncertainty of a position, and when there is a single best choice of value then top probability and top probability margin work similarly.

\begin{table}[t]
    \centering
    \caption{Comparison of accuracy for solving the Sudoku puzzle.}
    \vspace{0.1in}
    \begin{tabular}{l >{\centering\arraybackslash}p{1.4cm} c}  
        \toprule
        \textbf{Method} & \textbf{\# Param} & \textbf{Accuracy} \\
        \midrule
        ARM (w/o ordering) & \multirow{2}{*}{42M} & 9.73\% \\
        ARM (with ordering) &  & 87.18\% \\
        \midrule
        MDM (vanilla) & \multirow{3}{*}{6M} & 6.88\% \\
        MDM (Top probability) &  & 18.51\% \\
        MDM (Top prob. margin) &  & 89.49\% \\
        \bottomrule
    \end{tabular}
    \label{tab:sudoku-results}
\end{table}

\subsection{Adaptive MDM inference} \label{subsec:adaptive_inference}

In this section, we experimentally validate that adaptive MDM inference helps MDMs avoid hard subproblems, leading to better likelihood matching. We first show our results on L\&O-NAE-SAT and text data, before turning to our primary application to logic puzzles.

\textbf{L\&O-NAE-SAT and text data.} For the L\&O-NAE-SAT distribution defined in \cref{sec:imbalance_error}, we evaluate the effectiveness of adaptive inference by measuring the accuracy in predicting the observation tokens. Table~\ref{tab:csp_sampler} in the appendix reveals a clear improvement over vanilla inference. For the text dataset, we evaluate using the standard metric of \emph{generative perplexity}, by which likelihood is measured by a large language model. We also compute the entropy of the generated samples to ensure both inference strategies exhibit similar levels of diversity. As shown in Fig.~\ref{fig:genppl}, we observe a substantial decrease in generative perplexity using adaptive inference. We defer further experimental details to Appendix~\ref{appendix:exp_detail_inference}.

\textbf{Logic puzzles.} We consider two different types of logic puzzles: Sudoku and Zebra (Einstein) puzzles. Intuitively, for Sudoku, some empty (masked) cells are significantly easier to predict than others and we want to choose the cells that are easier to predict during the inference. We evaluate the effectiveness of adaptive MDM inference over vanilla MDM inference in selecting such cells.\footnote{A prior work \cite{ye2024beyond} reported that a $6$M MDM with \topk inference achieves 100\% accuracy on Sudoku. Given that a 6M MDM with \topk only achieves 18.51\% on our dataset (Table~\ref{tab:sudoku-results}), this suggests that the Sudoku dataset in~\cite{ye2024beyond} is significantly easier than ours.}

To measure the performance of an inference method, we use the percentage of correctly solved puzzles. For both puzzles, we use train and test datasets from \cite{shah2024causal}. For the Sudoku puzzle (Table~\ref{tab:sudoku-results}) we observe that adaptive MDM inference, in particular, Top probability margin strategy, obtains substantially higher accuracy (89.49\%) compared to vanilla MDM inference (6.88\%). Additionally, Top probability margin obtains higher accuracy (89.49\%) than Top probability strategy (18.51\%). As mentioned in \cref{subsec:effective-design}, this is because Top probability margin strategy more reliably estimates uncertainty when multiple competing values are close in probability at a given position, as is often the case in Sudoku. For the Zebra puzzle, as shown in \Cref{tab:zebra-results}, we observe a consistent result: Top probability (98.5\%) and Top probability margin (98.3\%) outperform vanilla MDM inference (76.9\%).

\begin{table}[h]
    \centering
    \caption{Comparison of accuracy for solving the Zebra puzzle.}
    \vspace{0.1in}
    \begin{tabular}{l >{\centering\arraybackslash}p{1.4cm} c}  
            \toprule
        \textbf{Method} & \textbf{\# Param} & \textbf{Accuracy} \\
        \midrule
        ARM (w/o ordering) & \multirow{2}{*}{42M} & 80.31 \% \\
        ARM (with ordering) &  & 91.17 \% \\
        \midrule
        MDM (vanilla) & \multirow{3}{*}{19M} & 76.9 \% \\
        MDM (Top probability) &  & 98.5 \% \\
        MDM (Top prob. margin) &  & 98.3 \% \\
        \bottomrule
    \end{tabular}
    \label{tab:zebra-results}
\end{table}

\subsection{Eliciting sequence-dependent reasoning paths using adaptive MDM inference in logic puzzles} 
\label{sec:sequence-dependent-tasks}

In this section, we study the effectiveness of adaptive MDM inference in finding the right reasoning/generation order for tasks where every sequence has a different ``natural'' order. To do so, we will compare the performance of adaptive MDM inference to that of ARM on Sudoku and Zebra puzzles. For these puzzles, the natural order of generation is not only different from left-to-right, but it is also sequence-dependent. For such tasks, prior works have shown that ARMs struggle if the information about the order is not provided during the training \cite{shah2024causal, lehnert2024beyond}. Therefore, to obtain a strong baseline, we not only consider an ARM trained without the order information but also consider an ARM trained with the order information for each sequence in the training data. Note that the latter is a much stronger baseline than the former as one can hope to teach the model to figure out the correct order by some form of supervised teacher forcing (as performed in \citet{shah2024causal, lehnert2024beyond}), eliminating the issue of finding the right order in an unsupervised manner. 

We compare ARMs and MDMs for Sudoku in \cref{tab:sudoku-results} and Zebra puzzles in \cref{tab:zebra-results}. We observe that for both, Top probability margin-based adaptive MDM inference not only outperforms the ARM trained without ordering information, but it \emph{\textbf{even outperforms the ARM trained with ordering information}}! This shows that the \emph{unsupervised} way of finding the correct order and solving such logic puzzles using adaptive MDM inference outperforms the \emph{supervised} way of finding the correct order and solving such puzzles using an ARM, and is significantly less computationally intensive.

\subsection{Adaptive MDM inference on natural language tasks}

To examine the effect of different inference strategies on text benchmarks, we adapted LLaDA, the 8B MDM model from \cite{nie2025large}. We compare three inference strategies: vanilla, top probability, and top probability margin. The results are presented in Table~\ref{tab:sampler_performance}.

We see that both adaptive MDM inference strategies, top probability and top probability margin, consistently outperform vanilla MDM inference. Notably, top probability margin demonstrates a clear advantage over top probability in challenging tasks like HumanEval-Multiline (infill), HumanEval-Split Line (infill), and Math. This is because Top probability margin provides a more reliable estimate of uncertainty when multiple tokens have similar probabilities, a frequent occurrence in these difficult tasks. These results further underscore the potential for developing new, sophisticated adaptive inference strategies for various tasks. We provide experimental details in Appendix~\ref{sec:llada_detail}.

\begin{table*}[!htbp]
    \centering
    \caption{Performance of different inference strategies for LLaDa 8B model on coding and math tasks.}
    \vspace{0.05in}
    \label{tab:sampler_performance}
    \begin{tabular}{lcccccc}
        \toprule
        \textbf{Method} & \textbf{HumanEval-Single} & \textbf{HumanEval-Multi} & \textbf{HumanEval-Split} & \textbf{Math} & \textbf{MMLU} & \textbf{ROCStories} \\
        \midrule
        Vanilla & 31.8\% & 16.5\% & 14.2\% & 28.5\% & 33.2\% & 21.23\% \\
        Top probability & 32.9\% & 20.8\% & 18.4\% & 31.3\% & \textbf{36.5\%} & 21.10\% \\
        Top prob. margin & \textbf{33.5\%} & \textbf{25.4\%} & \textbf{22.3\%} & \textbf{34.3\%} & 35.4\% & \textbf{21.41\%} \\
        \bottomrule
    \end{tabular}
\end{table*}

\subsection{Easy to hard generalization}

In the previous section we showed that when the training and inference sequences come from the same distribution, order-agnostic training of MDMs combined with adaptive inference can perform very well on logic puzzles. To evaluate if the model has learned the correct way of solving the puzzles and test the robustness of adaptive inference, we also test the MDMs on harder puzzles than the ones from training, for Sudoku. 

We keep the training dataset the same as proposed in \citet{shah2024causal}. \citet{shah2024causal} created this dataset from \citet{david_g__radcliffe_2020} by selecting the puzzles that can be solved using 7 fixed strategies and do not require backtracking-based search. We use the remaining puzzles in \citet{david_g__radcliffe_2020} as our hard dataset. Hence, these puzzles all use a strategy not seen during training and/or backtracking to obtain the correct solution.  

\begin{table}[t]
    \centering
    \caption{Comparison of accuracy for solving the hard Sudokus.}
    \vspace{0.1in}
    \begin{tabular}{l >{\centering\arraybackslash}p{1.4cm} c}
    \toprule
        \textbf{Method} & \textbf{$\#$Param} & \textbf{Accuracy} \\
        \midrule
         ARM (with ordering) & 42M & 32.57 \% \\
         \midrule
         MDM (random) & \multirow{3}{*}{6M} & 3.62 \% \\
         MDM (Top probability) & & 9.44 \% \\
         MDM (Top prob. margin) &  & 49.88 \% \\
         \bottomrule
    \end{tabular}
    \label{tab:easy-to-hard-sudoku}
\end{table}

We measure the accuracy of MDMs and ARMs on the hard test set and present the results in \cref{tab:easy-to-hard-sudoku}. We see that the Top probability margin-based adaptive MDM inference strategy (49.88\%) again significantly outperforms ARMs trained with order information (32.57\%). In particular, although the accuracy drops for both methods due to the more challenging test set, MDMs with adaptive inference appear to be more robust to this distribution shift than ARMs. We believe this is due to the fact that MDMs try to solve a significantly higher number of infilling problems than ARMs ($\exp(L)$ compared to $L$) and therefore are able to extract knowledge about the problem more efficiently than ARMs. 

\section{Conclusion}

In this work, we examined the impact of token generation order on training and inference in MDMs. We provided theoretical and experimental evidence that MDMs train on hard masking problems. We also demonstrated that adaptive inference strategies can be used to sidestep these hard problems. For logic puzzles, we find that this leads to dramatic improvements in performance not just over vanilla MDMs, but even over ARMs trained with teacher forcing to learn the right order of decoding. An important direction for future work is to go beyond the relatively simple adaptive strategies to find a better generation order like top probability and top probability margin considered here.

\paragraph{Acknowledgements.} JK thanks Kiwhan Song for discussions about MDM training. KS and VK are supported by the NSF AI Institute for Foundations of Machine Learning (IFML). KS and VK thank the computing support on the Vista GPU Cluster through the Center for Generative AI (CGAI) and the Texas Advanced Computing Center (TACC) at UT Austin. KS thanks Nishanth Dikkala for the initial discussions about the project. SK acknowledges: this work has been made possible in part by a
gift from the Chan Zuckerberg Initiative Foundation to establish the Kempner Institute
for the Study of Natural and Artificial Intelligence and support from the
Office of Naval Research under award N00014-22-1-2377. SC is supported by the Harvard Dean's Competitive Fund for Promising Scholarship and thanks Brice Huang and Sidhanth Mohanty for enlightening discussions about computational-statistical tradeoffs for planted CSPs. 

\section*{Impact statement}
This paper advances the understanding of discrete diffusion models, contributing to the broader field of Machine Learning. There are many potential societal consequences of our work, none of which we feel must be specifically highlighted here.

\bibliography{main}
\bibliographystyle{icml2025}

\newpage
\appendix
\onecolumn
\section{Related works}
\paragraph{Discrete diffusion models.}
(Continuous) diffusion models were originally built on continuous-space Markov chains with Gaussian transition kernels \cite{sohldickstein2015deep,ho2020denoising}. This was later extended to continuous time through the theory of stochastic differential equations \cite{song2021score}. In a similar vein, discrete diffusion models have emerged from discrete-space Markov chains \cite{hoogeboom2021argmax}. Specifically, \cite{austin2023structured} introduced D3PM with various types of transition matrices. Later, \citet{lou2024discrete} proposed SEDD, incorporating a theoretically and practically robust score-entropy objective. Additionally, \citet{varma2024glauber,liu2024think} introduced novel modeling strategies that classify tokens in a noisy sequence as either signal (coming from clean data) or noise (arising from the forward process). In particular, \citet{liu2024think} uses this to give a \emph{planner} that adaptively determines which tokens to denoise. While this is similar in spirit to our general discussion about devising adaptive inference strategies, we emphasize that their approach is specific to discrete diffusions for which the forward process \emph{scrambles} the token values, rather than masking them.

\paragraph{Masked diffusion models.} Meanwhile, the absorbing transition kernel has gained popularity as a common choice due to its better performance than other kernels. Building on this, \citet{sahoo2024simple,shi2025simplified} aligned its framework with continuous diffusion, resulting in a simple and principled training recipe, referring to it as \emph{Masked Diffusion Model}. Subsequent studies have explored various aspects of MDM. \citet{gong2024scaling} efficiently trained MDM via adaptation from autoregressive models, scaling MDM up to 7B parameters. \citet{zheng2024maskeddiffusionmodelssecretly} interpreted 
MDMs as order-agnostic learners and proposed a first-hitting sampler based on this insight. \citet{ye2024beyond,gong2024scaling} demonstrated that MDM outperforms autoregressive models in reasoning and planning tasks, emphasizing its impact on downstream applications. \citet{nie2024scaling} examined the scaling laws of MDM, while \citet{xu2024energy,liu2024copula} identified limitations in capturing coordinate dependencies when the number of sampling steps is small and proposed additional modeling strategies to address this issue. \citet{schiff2024simple} studied conditional generation using MDM and \citet{rectorbrooks2024steering} tackled the challenge of controlling generated data distributions through steering methodologies. \citet{chen2024convergence} provided a theoretical analysis showing that sampling error is small given accurate score function estimation.

\paragraph{Any-order reasoning.} Even though language tasks generally have a natural order of ``left-to-right" token generation, in many tasks like planning, reasoning, and combinatorial optimization, the natural order of token generation can be quite different from ``left-to-right". Even though prominent autoregressive-based language models achieve impressive performance on various tasks, many works \cite{golovneva2024reverse, chen2024premise, kitouni2024factorization} have shown that this performance is tied to the training order of the tasks and therefore can cause brittleness from it. For example, \citet{chen2024premise} showed that simply permuting the premise order on math tasks causes a performance drop of 30\%. The reason behind such brittleness regarding the ordering is the inherent ``left-to-right" nature of the autoregressive models. Several works \cite{liao-etal-2020-probabilistically} have tried to address this issue in the autoregressive framework. In particular, \cite{papadopoulos2024arrows} highlighted the significance of left-to-right ordering in natural language by comparing its likelihood to that of the reverse (right-to-left) ordering.

Recently, discrete diffusion models have emerged as a promising approach for discrete data apart from autoregressive models. Additionally, the order-agnostic training of discrete diffusion models opens up the multiple sampling paths during the inference but it also faces some challenges during the training therefore, they seem a promising approach to elicit any order reasoning. \citet{zheng2024reparameterized} proposed different ways of implementing an adaptive inference strategy for MDM but a \emph{concrete understanding of why such an adaptive inference strategy is needed is still lacking}. In this work, we explore various aspects of vanilla MDM training and how adaptive MDM inference can mitigate the issues raised by vanilla MDM training and elicit any order reasoning. 

We also want to mention the concurrent work by \citet{peng2025path} that proposes an alternative adaptive inference strategy by selecting $\mathcal F(\theta, x_t)$ based on the BERT model or the denoiser itself. In particular, \citet{peng2025path} uses the BERT model or the denoiser to obtain the uncertainty of a token and then uses Top-$K$ to decide the positions to unmask it. In contrast to their work, we disentangle the impact of token ordering on MDM training vs. MDM inference and provide a more complete understanding of the motivations for and benefits of adaptive inference. Additionally, our results indicate drawbacks to using Top-$K$ strategy as opposed to Top-$K$ margin in deciding which tokens to unmask when there are multiple values with high probabilities.

\paragraph{Beyond autoregressive models.}
Efforts to learn the natural language using non-autoregressive modeling began with BERT \cite{devlin-etal-2019-bert}. Non-causal approaches can take advantage of the understanding the text data representation. \cite{chang2022maskgit} adopted a similar approach for learning image representations. Building on these intuitions, 
\cite{shih2022training,hoogeboom2022autoregressive} proposed any-order modeling, which allows a model to generate in any desired order. \citet{shih2022training} made the same observation that any-order models by default have to solve exponentially more masking problems than autoregressive models. However, whereas our work shows that learning in the face of this challenging task diversity can benefit the model at inference time, their work sought to alleviate complexity at training time by reducing the number of masking problems that need to be solved.

\section{Technical details from Section~\ref{sec:hardness}}

\paragraph{Notations.} Throughout this section, we use $x^i$ to denote the $i$-th coordinate of the vector $x$ and $z{(j)}$ to denote the $j$-th example. The $i$-th coordinate of the vector $z{(j)}$ is denoted by $z{(j)}^i$.

\subsection{Additional example: sparse parity observations}
\label{app:parity}

\begin{example}[Noisy sparse parity observations]\label{example:xor}
    Let $m = 2$,  $k\in\mathbb{N}$, and $N^2\log N \ll P \le N^{0.49k}$. Fix \emph{noise rate} $\eta > 0$ as well as strings $z{(1)},\ldots, z{(P)}$ sampled independently and uniformly at random from the set of $k$-sparse strings in $\{0,1\}^N$. For each $j\in[P]$, define $\mathcal{O}_j(x)$ to be the distribution which places mass $1 -\eta$ on $1$ (resp. $2$) and mass $\eta$ on $2$ (resp. $1$) if $\sum_i x^i z{(j)}^i$ is odd (resp. even). Note that for $k = O(1)$, each of these observations is efficiently learnable by brute-force.
\end{example}

Below we show that for a certain range of masking fractions, a constant fraction of the masking problems for the corresponding L\&O distributions are computationally hard under the \emph{Sparse Learning Parity with Noise} assumption~\cite{alekhnovich2003more}. Formally we have:

\begin{proposition}\label{prop:lpn}
    Let $0 < \alpha < 1$ be an arbitrary absolute constant, and let $\eta = 1/\mathrm{poly}(N)$ be sufficiently large. Let $x$ be a sample from a L\&O distribution $p_{\rm data}$ with noisy parity observations as defined in Example~\ref{example:xor}. Suppose each token is independently masked with probability $\alpha$, and $M$ is the set of indices for the masked tokens. If $1 - 1/N \le \alpha \le 1 - 1/2N$, then under the Sparse Learning Parity with Noise (SLPN) assumption (see Definition~\ref{def:lpn}), with constant probability over $M$, no polynomial-time algorithm can solve the resulting masking problem of predicting any of the masked tokens among $x^{\pi(1)}, \ldots, x^{\pi(N)}$ given $x[M]$.
\end{proposition}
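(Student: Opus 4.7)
The plan is to reduce the Sparse Learning Parity with Noise (SLPN) problem to the masking-prediction problem, so that a polynomial-time solver for the latter would violate SLPN. I first isolate a constant-probability event $E$ over the masking $M$ under which (i) every latent coordinate $x^{\pi(1)}, \ldots, x^{\pi(N)}$ is masked and (ii) the number $T$ of unmasked observations is at least $(1-\alpha)P/2$. For $\alpha \in [1 - 1/N,\, 1 - 1/(2N)]$, the probability that all latents are masked is $\alpha^N = (1 - \Theta(1/N))^N = \Omega(1)$, giving (i). For (ii), since the $P$ observation masking indicators are independent Bernoullis with mean $1-\alpha \in [1/(2N),\, 1/N]$, Chernoff concentration around the mean $(1-\alpha)P \ge P/(2N) \gg \log N$ (using $P \gg N^2\log N$) yields (ii) with probability $1 - o(1)$, so a union bound gives $\Pr[E] = \Omega(1)$.

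Next, I reduce SLPN to the masking problem conditional on $E$. Given an SLPN instance with unknown uniform secret $y \in \{0,1\}^N$, noise rate $\eta$, and $T$ noisy $k$-sparse parity samples $(z_j, b_j)_{j \in [T]}$ (where $z_j$ is uniform $k$-sparse and $b_j = \langle z_j, y\rangle \oplus e_j$ with $\Pr[e_j=1]=\eta$), embed as an L\&O sample as follows: set the latents of $x$ to $y$, use the $z_j$ as the first $T$ observation descriptors in the L\&O distribution, freshly sample the remaining $P - T$ descriptors uniformly from the $k$-sparse strings in $\{0,1\}^N$, and fill in observation values using $b_j$ for $j \le T$ and honestly from $y$ for $j > T$. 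Present to the masking solver the instance in which every latent is masked and exactly the first $T$ observations are unmasked. Because the $z(j)$'s in the L\&O distribution are i.i.d.\ uniform $k$-sparse and the masking is independent of $x$, this is a faithful simulation of an L\&O sample conditioned on $E$, up to relabeling the identities of the $T$ revealed observations, which is a symmetry of the joint distribution. A polynomial-time solver that predicts any single masked latent $x^{\pi(i)}$ with nontrivial advantage therefore predicts the corresponding bit of the SLPN secret $y$ with nontrivial advantage. Since $T = \Theta(P/N)$ satisfies $N \log N \ll T \le N^{0.49k - 1}$, it is polynomial in $N$ and strictly below the conjectured $N^{k/2}$ tractability threshold for $k$-sparse noisy parity, so SLPN applies and rules out such a solver, establishing hardness with probability at least $\Pr[E] = \Omega(1)$ over $M$.

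The main obstacle I expect is bridging the usual \emph{search}-SLPN formulation (``cannot recover the full secret'') with what the proposition requires (``cannot predict any single masked latent bit''). I would handle this either by invoking a decisional variant of SLPN directly, or by a Goldreich--Levin-style random self-reduction: exploiting the affine symmetry $y \mapsto y \oplus e_i$ of SLPN, a distinguisher with nontrivial advantage on any one bit can be bootstrapped into full recovery of $y$ with polynomial overhead, contradicting search-SLPN. A secondary subtlety is verifying that conditioning on $E$ preserves the uniform prior on $y$; this holds because the masking is independent of the latents, so the conditional law of $y$ given $E$ coincides with $\prior^{\otimes N}$, and no information about $y$ leaks through the event $E$ beyond the unmasked observations themselves.
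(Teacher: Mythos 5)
Your proposal follows essentially the same route as the paper's proof: condition on the constant-probability event that all $N$ latent tokens are masked while $\Theta(P/N)$ observation tokens (between $\Omega(N\log N)$ and $O(N^{0.49k})$) remain unmasked, and observe that the conditional masking problem is exactly an SLPN instance in the computationally hard sample-size regime. The only substantive difference is the bridge from "cannot recover the full secret" to "cannot predict a single masked latent": you propose a decisional variant or a Goldreich--Levin-style self-reduction, whereas the paper argues that the $\Omega(N\log N)$ sample-size lower bound makes recovery information-theoretically possible, so the posterior marginals concentrate on the truth and estimating any coordinate's posterior mean already amounts to recovery --- your explicit reduction and your flagging of this gap are, if anything, more careful than the paper's terse treatment, and both resolutions are sound.
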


We note that it is important for us to take the observations to be \emph{sparse} parities and to leverage the \emph{Sparse} Learning Parity with Noise assumption. If instead we used \emph{dense} parities and invoked the \emph{standard} Learning Parity with Noise (LPN) assumption, we would still get the hardness of masking problems, but the observations themselves would be hard to learn, assuming LPN. This result is based on the following standard hardness assumption:

\begin{definition}[Sparse Learning Parity with Noise]\label{def:lpn}
    Given input dimension $N$, noise parameter $0 < \eta < 1/2$, and sample size $P$, an instance of the \emph{Sparse Learning Parity with Noise (SLPN)} problem is generated as follows:
    \begin{itemize}
        \item Nature samples a random bitstring $x$ from $\{0,1\}^N$
        \item We observe $P$ examples of the form $(x{(i)},y{(i)})$ where $x{(i)}$ is sampled independently and uniformly at random from $k$-sparse bitstrings in $\{0,1\}^N$, and $y$ is given by $\epsilon_i + \langle x{(i)}, x\rangle \pmod{2}$, where $\epsilon_i$ is $1$ with probability $\eta$ and $0$ otherwise.
    \end{itemize}
    Given the examples $\{(x{(i)},y{(i)})\}^P_{i=1}$, the goal is to recover $x$.

    The \emph{SLPN assumption} is that for any $P = N^{(1 - \rho)k/2}$ for constant $0 < \rho < 1$, and any sufficiently large inverse polynomial noise rate $\eta$, no $\mathrm{poly}(N)$-time algorithm can recover $x$ with high probability.
\end{definition}

\begin{proof}[Proof of Proposition~\ref{prop:lpn}]
    With probability at least $1 - (1 - 1/N)^N \ge \Omega(1)$, all of the variable tokens $x^{\pi(i)}$ for $i \le N$ are masked. Independently, the number of unmasked tokens among the observation tokens $\mathcal{O}_j$ is distributed as $\mathrm{Bin}(P, 1-\alpha)$, so by a Chernoff bound, with probability at least $1 - e^{-\Omega(P/N^2)} = 1 - 1/\mathrm{poly}(N)$ we have that at least $P/4N = \Omega(N\log N)$ observation tokens are unmasked. The masking problem in this case amounts to an instance of SLPN with input dimension $N$ and sample size in $[\Omega(N\log N), O(N^{0.49k})]$. Because of the lower bound on the sample size, prediction of $\mathbf{x}^M$ is information-theoretically possible. Because of the upper bound on the sample size, the SLPN assumption makes it computationally hard. As a result, estimating the posterior mean on any entry of $\mathbf{x}^M$ given the unmasked tokens is computationally hard as claimed.
\end{proof}

\subsection{Additional example: random slab observations}
\label{app:slab}

\begin{example}[Random slab observations]\label{example:perceptron}
    Let $m = 2$ and $P = \gamma N^2$ for constant $\gamma > 0$. Fix \emph{slab width} $\beta$ and vectors $z{(1)}, \ldots, z{(P)}$ sampled independently from $\mathcal{N}(0,I)$. For each $j\in[P]$, define the corresponding observation $\mathcal{O}_j(x)$ to be deterministically $1$ if $|\langle z{(j)}, 2x - \mathbf{1}\rangle| \le \beta\sqrt{N}$, and deterministically $0$ otherwise.
\end{example}

In~\cite{alaoui2024hardness}, it was shown that \emph{stable} algorithms (Definition~\ref{def:stable}), which encompass many powerful methods for statistical inference like low-degree polynomial estimators, MCMC, and algorithmic stochastic localization~\cite{gamarnik2021overlap}, are unable to sample from the posterior distribution over a random bitstring conditioned on it satisfying $|\langle z{(j)}, x\rangle| \le \beta\sqrt{N}$ for any $\Theta(N)$ number of constraints $z{(1)},\ldots,z{(P')}$, provided $P'$ is not too large that the support of the posterior is empty. This ensemble is the well-studied \emph{symmetric perceptron}~\cite{aubin2019storage}. The following is a direct reinterpretation of the result of~\cite{alaoui2024hardness}:

\begin{proposition}\label{prop:perceptron_masking}
    Let $p_{\rm data}$ be a L\&O distribution with random slab observations as defined in Example~\ref{example:perceptron}, with parameter $\gamma > 0$ and slab width $\beta > 0$. There exists a constant $c_\beta > 0$ such that for any absolute constant $0 < c < c_\beta$, 
    if $1 - c_\beta N/2P \le \alpha \le 1 - c N / P$ and $\gamma > c_\beta$, the following holds. Let $p'_{\rm data}$ denote the distribution given by independently masking every coordinate in $p_{\rm data}$ with probability $\alpha$. Then \emph{any} $(1 - \tilde{\Omega}(1/\sqrt{N}))$-stable algorithm, even one not based on masked diffusion, which takes as input a sample $x'$ from $p'_{\rm data}$ and, with probability $1 - o(1)$ outputs a Wasserstein-approximate\footnote{Here the notion of approximation is $o(1)$-closeness in Wasserstein-2 distance.} sample from $p_{\rm data}$ conditioned on the unmasked tokens in $x'$, must run in super-polynomial time.
\end{proposition}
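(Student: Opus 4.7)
The plan is to reduce the conditional sampling task to the stable-algorithm lower bound of \citet{alaoui2024hardness} for the symmetric binary perceptron. The key structural observation is that whenever all $N$ latent tokens of a sample are simultaneously masked, the conditional distribution of the latents given the unmasked observation tokens is exactly a uniform distribution over $\{0,1\}^N$ subject to slab constraints $|\langle z^{(j)}, 2\tilde x - \mathbf{1}\rangle| \le \beta\sqrt N$ for each observation revealed as $1$ and the complementary inequality for each observation revealed as $0$. This is a symmetric perceptron posterior augmented by a random set of \emph{negated} slab constraints.

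First, I would control the randomness in the mask. Since $1-\alpha = \Theta(N/P) = \Theta(1/N)$, the probability that all $N$ latent tokens are masked simultaneously is $\alpha^N = (1-\Theta(1/N))^N = \Omega(1)$. Independently, by a Chernoff bound, the number of unmasked observation tokens concentrates around $P(1-\alpha) \in [cN,\, c_\beta N/2]$, and a $q_\beta$-fraction of these are revealed as $1$, where $q_\beta := \Pr_{z,\tilde x}[|\langle z, 2\tilde x - \mathbf{1}\rangle| \le \beta\sqrt N] \in (0,1)$. Choosing $c_\beta$ (and thereby the admissible range for $c$) so that $q_\beta c_\beta/2$ lies just below the satisfiability threshold $\alpha_{\rm SAT}(\beta)$ of the symmetric perceptron, and so that $q_\beta c$ sits above the computational threshold of \citet{alaoui2024hardness}, places the effective number of \emph{positive} slab constraints in the conjectured hard regime with constant probability.

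Next, I would formalize the reduction. Given any putative $(1-\tilde\Omega(1/\sqrt N))$-stable algorithm $\mathcal A$ that, with probability $1-o(1)$, outputs a Wasserstein-approximate sample from $p_{\rm data}(\cdot \mid x'_{\mathrm{unmasked}})$, I would post-compose $\mathcal A$ with the projection onto the latent coordinates. On the $\Omega(1)$-probability good event above, this yields a stable sampler for the augmented perceptron posterior; combined with the \citet{alaoui2024hardness} lower bound—\emph{provided} that lower bound can be shown to apply to the augmented posterior—$\mathcal A$ must run in super-polynomial time.

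The main obstacle is certifying that the hardness of \citet{alaoui2024hardness} is preserved when a $\Theta(N)$-sized random collection of slab constraints are violated rather than satisfied. The cleanest route, I expect, is to revisit their Franz--Parisi / overlap-gap calculation and observe that, because the positive and negative constraint indicators are conditionally independent and $q_\beta$ is bounded strictly inside $(0,1)$, the pairwise-overlap distribution for the augmented posterior remains bimodal and thus still obstructs stable sampling. If this extension turns out to be fragile, a fallback is to argue that for typical masks the fraction of negative constraints concentrates around $1-q_\beta$, and to apply a planted-Bayesian or gauge-transformation argument that reduces the augmented problem to a rescaled symmetric perceptron at a density still falling within the hard window of \citet{alaoui2024hardness}.
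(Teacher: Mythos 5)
Your proposal follows essentially the same route as the paper's proof: condition on the constant-probability event that all $N$ latent tokens are masked (the paper gets this by a union bound, $1-(1-\alpha)N \ge 1 - c_\beta/\gamma$, you by computing $\alpha^N = \Omega(1)$ directly — both fine), use a Chernoff bound to place the number of unmasked observation tokens at a constant multiple of $N$ in roughly $[cN,\, c_\beta N/2]$, and then invoke the stable-sampling lower bound of \citet{alaoui2024hardness} for the planted symmetric binary perceptron (Theorem~\ref{thm:gamarnik}), noting that a conditional sampler for $p_{\rm data}$ restricted to the latent coordinates would contradict it.

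The one place you genuinely diverge is your treatment of the unmasked observations with value $0$. You are right that, conditioned on the latents, a value-$1$ observation reveals $z^{(j)} \sim \mathcal{N}(0,I)$ conditioned on the slab event — exactly the planted perceptron constraint — whereas a value-$0$ observation conditions on the complement, so the true posterior is the perceptron posterior augmented by $\Theta(N)$ negated slab constraints. The paper's proof does not engage with this at all: it feeds every unmasked observation into Theorem~\ref{thm:gamarnik} and declares the claim immediate. So on this point you are more careful than the paper; but you also leave it unresolved (``provided that lower bound can be shown to apply to the augmented posterior''), so your argument is incomplete at precisely the step the paper treats as automatic, and your proposed fixes (redoing the Franz--Parisi/overlap computation, or a gauge/contiguity argument) remain speculative. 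One small correction to your threshold bookkeeping: Theorem~\ref{thm:gamarnik} gives hardness for \emph{every} constant constraint density in $(0, c_\beta)$, so there is no lower ``computational threshold'' to clear and no need to tune $q_\beta c_\beta/2$ against $\alpha_{\rm SAT}(\beta)$; the constraint $\alpha \le 1 - cN/P$ only ensures the revealed constraint density is a positive constant, and the upper constraint only ensures it stays below $c_\beta$.
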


The upshot of this is that any stable, polynomial-time masked diffusion sampler will, with non-negligible probability, encounter a computationally hard masking problem at some point during the reverse process.

For the proof, we first formally define the (planted) symmetric Ising perceptron model:

\begin{definition}
    Let $\alpha, \beta > 0$. The \emph{planted symmetric Ising perceptron} model is defined as follows: 
    \begin{itemize}
        \item Nature samples $\sigma$ uniformly at random from $\{\pm 1\}^N$
        \item For each $j = 1,\ldots,P = \lfloor \alpha N\rfloor$, we sample  $z{(j)}$ independently from $\mathcal{N}(0,I_N)$ conditioned on satisfying $|\langle z{(j)}, \sigma\rangle| \le \beta\sqrt{N}$.
    \end{itemize}
    The goal is to sample from the posterior on $\sigma$ conditioned on these observations $\{z{(i)}\}^P_{i=1}$.
\end{definition}

Next, we formalize the notion of \emph{stable algorithms}.

\begin{definition}\label{def:stable}
    Given a matrix $Z\sim\mathcal{N}(0,1)^{\otimes P\times N}$, define $Z_t = tZ + \sqrt{1 - t^2}Z'$ for independent $Z'\sim\mathcal{N}(0,1)^{\otimes P\times N}$. 
    A randomized algorithm $\mathcal{A}$ which takes as input $Z\in\mathbb{R}^{P\times N}$ and outputs an element of $\{\pm 1\}^N$ is said to be \emph{$t_N$-stable} if $\lim_{N\to\infty} W_2(\mathrm{law}(\mathcal{A}(Z)), \mathrm{law}(\mathcal{A}(Z_t))) = 0$.
\end{definition}

As discussed at depth in~\cite{gamarnik2021overlap}, many algorithms like low-degree polynomial estimators and Langevin dynamics are stable. 

\begin{theorem}[Theorem 2.1 in~\cite{alaoui2024hardness}\footnote{Note that while the theorem statement in~\cite{alaoui2024hardness} refers to the non-planted version of the symmetric binary perceptron, the first step in their proof is to argue that these two models are mutually contiguous in the regime of interest.}]\label{thm:gamarnik}
    For any constant $\beta > 0$, there exists $c_\beta > 0$ such that the following holds for all constants $0 < \alpha < c_\beta$. For $t_N \le 1 - \Omega(\log^2(n) / n^2)$, any $t_N$-stable randomized algorithm $\mathcal{A}$ which takes as input $Z = (z{(1)},\ldots,z{(P)})$ and outputs an element of $\{\pm 1\}^N$ will fail to sample from the posterior on $\sigma$ conditioned on $Z$ in the symmetric Ising perceptron model to Wasserstein error $o(\sqrt{N})$.
\end{theorem}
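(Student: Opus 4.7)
The plan is to follow the standard \emph{branching overlap gap property} (OGP) framework for ruling out stable algorithms, adapted to the symmetric Ising perceptron. The overall structure has four phases: (i) reduce planted to non-planted via contiguity, (ii) establish a branching OGP for correlated perceptron instances, (iii) transfer the OGP to a Wasserstein-approximate sampler using stability, and (iv) close the argument by contradiction.

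First I would invoke the mutual contiguity between the planted and quenched symmetric Ising perceptron models in the regime $\alpha < c_\beta$, so that it suffices to prove the lower bound in the quenched model where $Z$ has i.i.d.\ Gaussian rows and $\sigma$ is defined via conditioning. This lets me work with a cleaner product measure on $Z$ and apply the Gaussian interpolation $Z_t = tZ + \sqrt{1-t^2}Z'$ from Definition~\ref{def:stable} without worrying about the planted spike. Next, I would analyze the geometry of near-solutions: for a realization $Z$, let $S(Z) \subseteq \{\pm 1\}^N$ denote the set of $\sigma$ satisfying all $P$ slab constraints. The key structural fact to prove is that $S(Z)$ is \emph{frozen}, i.e.\ for a typical pair $(\sigma,\sigma')$ drawn from the uniform (or Gibbs) measure on $S(Z)$, the normalized overlap $\tfrac{1}{N}\langle \sigma,\sigma'\rangle$ concentrates near $0$, with an exponentially small mass on any intermediate overlap value in a forbidden band $(q_\star, 1 - q_\star)$.

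The central step, and the main obstacle, is upgrading this single-instance freezing to a \emph{branching} OGP across the correlated pair $(Z, Z_t)$. Concretely I would control the expected number of pairs $(\sigma_1, \sigma_2)$ with $\sigma_1 \in S(Z)$, $\sigma_2 \in S(Z_t)$, and overlap $q := \tfrac{1}{N}\langle \sigma_1,\sigma_2\rangle$ in the forbidden band. This is a two-replica second-moment computation: after Gaussianizing the slab constraints, the log-probability that a fixed pair with overlap $q$ jointly satisfies all $P$ constraints factorizes over the rows and reduces to a $2$-dimensional Gaussian orthant-type integral depending on $(q,t)$. One shows that the resulting rate function, in the appropriate regime of $\alpha$ and with $t_N$ close enough to $1$ (the $1-\tilde{\Omega}(1/N^2)$ slack is exactly what is needed to absorb error terms), is strictly negative on the band, so by Markov the forbidden band is empty with high probability. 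The delicate part is tracking the continuity of this rate function as $t\to 1$ and choosing the forbidden band to simultaneously exclude both the diagonal (overlap $\approx 1$) and the bulk (overlap $\approx 0$); this is where the $\log^2 N / N^2$ window and the constant $c_\beta$ are pinned down.

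Finally, I would close the argument by contradiction. Suppose $\mathcal{A}$ is $t_N$-stable and outputs an $o(\sqrt{N})$-Wasserstein approximation to the posterior. By stability, $W_2(\mathrm{law}(\mathcal{A}(Z)), \mathrm{law}(\mathcal{A}(Z_t))) = o(1)$, so one can couple $\sigma_1 := \mathcal{A}(Z)$ and $\sigma_2 := \mathcal{A}(Z_t)$ so that $\|\sigma_1 - \sigma_2\|_2 = o(\sqrt{N})$, i.e.\ their overlap is $1 - o(1)$. On the other hand, because $\mathcal{A}$ approximates the posterior and the posterior is frozen in the sense above, a further coupling with independent posterior draws forces the pair $(\sigma_1, \sigma_2)$, viewed as a joint solution to $(Z, Z_t)$, to have overlap concentrated away from $1$, landing in the forbidden band established in phase (ii). This contradicts the branching OGP, and a quantitative version (using the $o(\sqrt{N})$ error budget) yields failure with probability $1-o(1)$, as claimed. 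The main technical hazard is ensuring the Wasserstein approximation is strong enough to survive transport into the overlap statistic; handling this cleanly is exactly why the theorem requires $o(\sqrt{N})$ error rather than a weaker guarantee.
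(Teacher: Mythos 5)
This statement is not proved in the paper at all: it is imported verbatim (as Theorem~\ref{thm:gamarnik}) from \cite{alaoui2024hardness} and used as a black box in the proof of Proposition~\ref{prop:perceptron_masking}, with only the footnoted remark that the planted and non-planted ensembles are mutually contiguous. So there is no internal proof to compare against; what you have written is a reconstruction of the \emph{external} argument. As such, your outline does capture the right skeleton of that argument --- contiguity to reduce to the non-planted model, a structural statement about the frozen geometry of the solution set of the symmetric binary perceptron, a correlated-pair moment computation over $(Z, Z_t)$, and a stability-based contradiction --- and the role you assign to the $1-\tilde\Omega(\log^2 N/N^2)$ window and to the constant $c_\beta$ is essentially correct.

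Two substantive caveats. First, the mechanism is more accurately described as \emph{transport instability} of the posterior under resampling of the disorder than as a branching OGP: because solutions of the symmetric perceptron are w.h.p.\ isolated singleton clusters, a first-moment computation shows that no pair $(\sigma_1,\sigma_2)$ with $\sigma_1$ an (approximate) solution to $Z$, $\sigma_2$ to $Z_t$, and overlap close to $1$ exists, which forces $W_2(\mu_Z,\mu_{Z_t}) = \Omega(\sqrt{N})$ for the two posteriors; no multi-replica tree structure is needed. Second, your phase (iv) is muddled as written: you argue the algorithm's pair lands in a forbidden band $(q_\star, 1-q_\star)$, but stability forces overlap $\approx 1$, which lies \emph{outside} that band, and the bulk overlap $\approx 0$ is likewise not forbidden. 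The clean contradiction is the triangle inequality in $W_2$: if $\mathcal{A}(Z)\approx \mu_Z$ and $\mathcal{A}(Z_t)\approx\mu_{Z_t}$ each to error $o(\sqrt N)$ and $\mathcal{A}$ is stable, then $W_2(\mu_Z,\mu_{Z_t}) = o(\sqrt N)$, contradicting the $\Omega(\sqrt N)$ lower bound. With that repair your sketch matches the known proof strategy, though the central moment computation and its uniformity in $t$ near $1$ remain asserted rather than carried out.
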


\begin{proof}[Proof of Proposition~\ref{prop:perceptron_masking}]
    By a union bound, with probability at least $1 - (1 - \alpha) N \ge 1 - c_\beta N^2/P \ge 1 - c_\beta /\gamma$ over a draw $x' \sim p'_{\rm data}$, all of the $x^{\pi(i)}$ tokens are masked. The number of unmasked tokens in $x'$ among the observations $\mathcal{O}_j$ is distributed as $\mathrm{Bin}(P, 1 - \alpha)$. By a Chernoff bound, this is in $[3cN/4, 3c_\beta N/4]$ with at least constant probability. The claim then follows immediately from Theorem~\ref{thm:gamarnik} above. 
\end{proof}

\subsection{Proof outline of Proposition~\ref{prop:csp}} \label{appendix:pf_outline_hardness}
 To understand the proof idea, we consider the case where all the latent tokens are masked and some of the observation tokens are unmasked. In this case, the prediction task reduces to learning to recover the latent tokens that are consistent with the observations. Intuitively, each observation provides some constraints and the task is to recover an assignment that satisfies the constraints. This is reminiscent of \emph{Constraint Satisfaction Problems} (CSPs). Indeed, to show the hardness result, we use the rich theory developed for \emph{planted} CSPs at the intersection of statistical physics and average-case complexity. 

In a planted CSP, there is an unknown randomly sampled vector $y$ of length $N$ and, one is given randomly chosen Boolean constraints 
which $y$ is promised to satisfy, and the goal is to recover $y$ as best as possible (see Definition~\ref{def:plantedcsp}). Prior works have shown the hardness of efficiently learning to solve the planted CSP problem \cite{krzakala2009hiding, alaoui2024hardness}. We show the hardness of masking problems in L\&O distributions based on these results. Consider the ground truth latent tokens as the random vector $y$ and each observation as a constraint. In this case, the problem of learning to recover the latent tokens from the observation tokens reduces to recovery for the planted CSP.

There are precise predictions for the values of vocabulary size $m$ and the number of observations for which the information-theoretically best possible overlap and the best overlap achievable by any computationally efficient algorithm are different. We show that these predictions directly translate to predictions about when masking problems become computationally intractable:

\begin{figure}
    \centering
    \hspace{-5mm}    \includegraphics[width=0.5\linewidth]{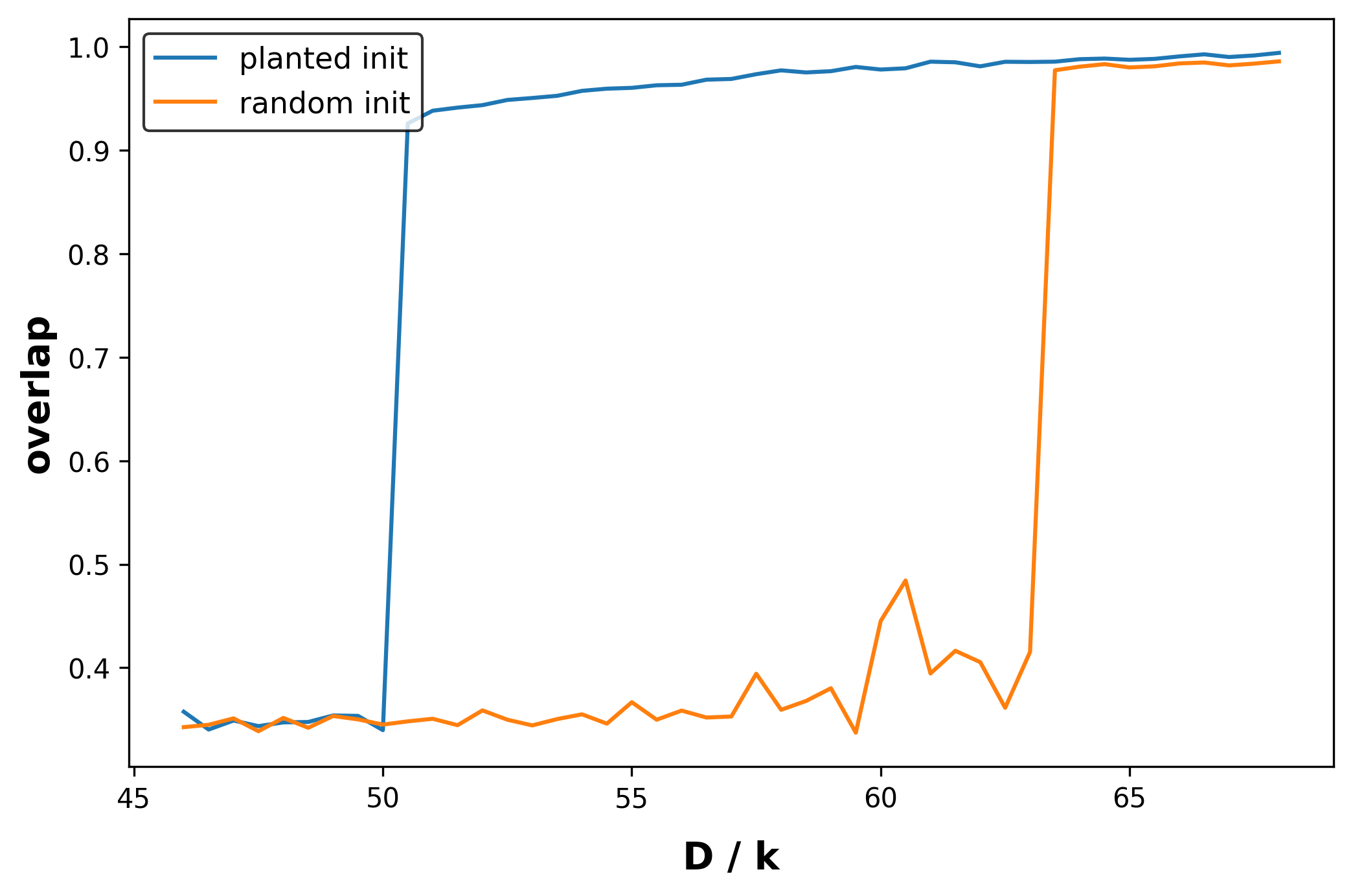}
    \vspace{-0.05in}
    \caption{Overlap achieved by belief propagation initialized at ground truth versus random for planted CSP with $k = 3$, $m = 3$, and $g = \mathrm{NAE}$, for $N = 10000$ and varying choices of average degree $D$. $D_{\rm KS} / K$ can be shown analytically to be $64$, consistent with the phase transition depicted. Plot suggests $D_{\rm cond}/K \approx 50$. By Prop.~\ref{prop:csp} this implies a range of masking fractions at which $\Omega(1)$ fraction of masking problems are computationally hard.}
    \label{fig:csp}
\end{figure}

As a simple example, let us consider sparse predicate observations with $k=2$ and $g(x',x'') = \mathbf{1}[x' \neq x'']$. These can be formally related to the well-studied problem of \emph{planted $m$-coloring}. In the planted $m$-coloring, a random graph of average degree $D$ is sampled consistent with an unknown vertex coloring and the goal is to estimate the coloring as well as possible~\cite{krzakala2009hiding}, as measured by the \emph{overlap} of the output of the algorithm to the ground-truth coloring (see Definition~\ref{def:plantedcsp}). As a corollary of our main result, we show that when all the latent tokens $x^{\pi(1)}, \ldots, x^{\pi(N)}$ are masked and a few unmasked observation tokens provide the information of the form $g(x^{\pi(i)}, x^{\pi(j)}) = \mathbf{1}[ x^{\pi(i)} \neq x^{\pi(j)} ]$ for $i, j \leq N$, then solving the masking problem can be reduced to solving planted coloring.

For planted $m$-coloring, when $m = 5$ the thresholds in Proposition~\ref{prop:csp} are given by $D_{\rm KS} / 2 = 16$ and $D_{\rm cond} / 2 \approx 13.23$~\cite{krzakala2009hiding} (the factor of $2$ here is simply because the observations correspond to \emph{ordered} subsets of size $2$). For general predicates and arities, there is an established recipe for numerically computing $D_{\rm KS}$ and $D_{\rm cond}$ based on the behavior of the \emph{belief propagation} algorithm (see the discussion in Appendix~\ref{app:planted_result}). As an example, in Fig.~\ref{fig:csp}, we execute this recipe for $m = 3$, $k = 3$, and $g$ given by the Not-All-Equal predicate $\mathrm{NAE}(x',x'',x'') = 1 - \mathbf{1}[x' = x'' = x''']$ to obtain thresholds that can be plugged into Proposition~\ref{prop:csp}.

\paragraph{Additional examples of the hardness.} The above setup can also be generalized to capture \emph{Bayesian constraint satisfaction problems}~\cite{montanari2008estimating,liu2022statistical}, one notable example of which is the stochastic block model~\cite{PhysRevE.84.066106}. There are analogous predictions for the onset of hardness of inference, which can likewise be translated to hardness of masking problems for seemingly benign L\&O distributions. In Appendix~\ref{app:parity} and~\ref{app:slab}, we give two more examples of L\&O distributions for which order-aware training is tractable yet order-agnostic training of the MDM is computationally hard. 

First, we consider L\&O distributions whose observations are sparse, noisy parities in the latents and deduce hardness for order-agnostic training from the Sparse Learning Parity with Noise assumption~\cite{alekhnovich2003more}. We then consider L\&O distributions whose observations are \emph{generalized linear models} in the latents, and deduce hardness for a large class of efficient algorithms from existing results on Lipschitz hardness~\cite{alaoui2024hardness} for the symmetric binary perceptron~\cite{aubin2019storage}.

\subsection{Proof of Proposition~\ref{prop:csp}: sparse predicate observations}
\label{app:planted_result}

Here we formally define the relevant notions needed to formalize our claim about hardness in Proposition~\ref{prop:csp}.

\begin{definition}[Planted CSPs]\label{def:plantedcsp}
    Given arity $k\in\mathbb{N}$, vocabulary/alphabet size $m\in\mathbb{N}$, predicate $g: \{1,\ldots,m\}^k \to \{0,1\}$, latent dimension $N$, and clause density $P/N$, the corresponding \emph{planted constraint satisfaction problem} is defined as follows: Nature samples an unknown assignment $\sigma$ uniformly at random from $\{ 1, \ldots, m \}^N$, and then for each ordered $k$-tuple $S$ of distinct elements from $[N]$, we observe the \emph{clause} $S$ independently with probability $\phi / N^{k-1}$ if $g(\sigma|_S) = 1$.

    To measure the quality of an algorithm for recovering $\sigma$ given the observations, define the \emph{overlap} between an estimate $\hat{\sigma}$ and the ground truth $\sigma$ by $d(\sigma,\hat{\sigma}) \triangleq \min_{\pi\in\mathbb{S}_N} \sum_i \mathbf{1}[\sigma_i = \pi(\hat{\sigma}_i)]$ where $\mathbb{S}_N$ denotes the set of all permutations of $\{0, 1, \ldots, N-1\}$. Define the \emph{average degree} to be $kP/N$, i.e. the expected number of variables that share at least one clause with a given variable.
\end{definition}

We begin by defining the central algorithm driving statistical physics predictions about hardness for random constraint satisfaction problems: belief propagation (BP).

\begin{definition}[BP update rules]\label{def:BP}
    Belief propagation is an algorithm that iteratively updates a set of \emph{messages} $\{\msg^{i\to S}_c[t], \msg^{S\to i}_c[t]\}$, where $i, S$ range over all pairs of variable indices $i\in[N]$ and observations $S\ni i$. At time $t+1$, the messages are computed via
    \begin{align}
        \msg^{i\to S}_c[t+1] &\propto \prod_{T: i\in T\neq S} \msg^{T\to i}_c[t] \\
        \msg^{S\to i}_c[t+1] & \propto \sum_{\overline{\sigma}\in \{ 1,\ldots,m \}^{S\backslash i}} g(\overline{\sigma}\cup_i c) \prod_{j: i\neq j\in S} \msg^{j\to S}_{\overline{\sigma}_j}[t]\,,
    \end{align}
    where $\overline{\sigma}\cup_i c \in \{1, \ldots, m \}^S$ assigns $c$ to entry $i$ and $\overline{\sigma}$ to the remaining entries.

    A set of messages can be used to estimate the marginals of the posterior on $\sigma$ conditioned on the observations as follows. The marginal on the $i$-th variable has probability mass function over $\{1, \ldots, m\}$ proportional to $\{\prod_{T: i\in T} \msg^{T\to i}_c\}$. Given a set of marginals, a natural way to extract an estimate for $\sigma$ is to round to the color in $\{1, \ldots, m\}$ at which the probability mass function is largest.
\end{definition}

Throughout we will make the following assumption that ensures that the trivial messages $\msg^{i\to S}_c = 1/m$ and $\msg^{S\to i}_c = 1/m$ are a fixed point, sometimes called the \emph{paramagnetic fixed point}, for the iteration above:

\begin{assumption}\label{assume:paramagnetic}
    The quantity $\sum_{\overline{\sigma}\in \{1,\ldots,m\}^{[k]}\backslash i} g(\overline{\sigma}\cup_i c)$ is constant across all $c\in \{1,\ldots,m\} $ and $i\in[k]$.
\end{assumption}

\begin{definition}\label{def:thresholds}
    Given $k,m,g$, the \emph{Kesten-Stigum} threshold $D_{\rm KS}$ is defined to be the largest average degree for which BP is locally stable around the paramagnetic fixed point, that is, starting from a small perturbation of the paramagnetic fixed point, it converges to the paramagnetic fixed point. More formally, $D_{\rm KS}$ is the largest average degree at which the Jacobian of the BP operator $\{\msg^{i\to S}[t]\}\mapsto \{\msg^{i\to S}[t+1]\}$ has spectral radius less than $1$.

    The \emph{condensation} threshold $D_{\rm cond}$ is defined to be the largest average degree at which the planted CSP ensemble and the following simple \emph{null model} become mutually contiguous and thus statistically indistinguishable as $N \to \infty$. The null model is defined as follows: there is no single unknown assignment, but instead for every ordered subset $S$ of $k$ variables, Nature independently samples an unknown local assignment $\sigma_S \in \{1,\ldots,m\}^S$, and the observation is included with probability $\phi / N^{k-1}$ if $g(\sigma_S) = 1$. 
\end{definition}

For $D_{\rm cond} < kP/N < D_{\rm KS}$, there exists some \emph{other} fixed point of the BP operator whose marginals, once rounded to an assignment, achieves strictly higher overlap than does BP with messages initialized randomly. The prediction is that in this regime, no efficient algorithm can achieve optimal recovery~\cite{krzakala2009hiding}.

\begin{conjecture}[1RSB cavity prediction]\label{conj:1rsb}
    Suppose $k, m, g$ satisfy Assumption~\ref{assume:paramagnetic}, and let $D_{\rm KS}$ and $D_{\rm cond}$ denote the associated Kesten-Stigum and condensation thresholds for the average degree. Then for all $P$ for which $D_{\rm cond} < kP/N < D_{\rm KS}$, the best overlap achieved by a computationally efficient algorithm for recovering $\sigma$ is strictly less than the best overlap achievable.
\end{conjecture}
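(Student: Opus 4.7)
}
The plan is to establish the claim by bounding the two sides of the inequality separately in the strip $D_{\rm cond}<kP/N<D_{\rm KS}$: a lower bound on the best statistically achievable overlap and a matching upper bound on the best efficiently achievable overlap, leaving a strict gap in between.

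For the statistical side, I would use the definition of $D_{\rm cond}$ as the threshold beyond which the planted ensemble and the null model of Definition~\ref{def:thresholds} cease to be mutually contiguous and become distinguishable as $N \to \infty$. Classical Bayesian arguments via the Nishimori identities then show that the posterior marginals on $\sigma$ correlate non-trivially with the planted assignment, so the Bayes-optimal estimator achieves overlap $1/m + \Omega(1)$. Concretely, this can be read off from the non-trivial BP fixed point whose existence is itself the criterion used to locate the condensation threshold; rounding its per-site marginals to colors in $\{1,\ldots,m\}$ yields an assignment attaining this strictly superior overlap with high probability.

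For the computational side, the starting observation is that by the very definition of $D_{\rm KS}$ the BP operator is a local contraction at the paramagnetic fixed point. A second-moment/Kesten-Stigum argument, carried out on the local weak limit (each variable's neighborhood in the factor graph is, up to radius $\Theta(\log N)$, a multi-type Galton-Watson tree with high probability), then shows that BP initialized from random messages converges back to the trivial marginals, yielding overlap $1/m + o(1)$. To upgrade this from ``BP fails'' to ``every polynomial-time algorithm fails,'' the cleanest route I know is the low-degree polynomial framework: one computes the degree-$N^{\Omega(1)}$ likelihood ratio between the planted and null models and shows it remains bounded in the target strip, which rules out polynomial-time distinguishers and, by standard reductions between detection and estimation, polynomial-time estimators achieving overlap $1/m + \Omega(1)$. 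Complementary routes include establishing an overlap gap property in the style of~\cite{alaoui2024hardness} to rule out all stable algorithms (AMP, Langevin, low-degree polynomials) simultaneously, or a black-box reduction from a canonically hard average-case problem such as planted clique.

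The main obstacle is precisely this last upgrade, from failure of BP to failure of arbitrary polynomial-time algorithms, which is the actual content of the 1RSB cavity prediction and the reason the paper states this as a conjecture rather than a theorem. The first two steps use well-developed tools (Nishimori, local weak convergence, Galton-Watson spectral radius), but rigorous instances of the full statement are presently known only for specific predicate families --- e.g.\ the stochastic block model and planted coloring for large $m$ --- where the low-degree or OGP calculations can be pushed through by hand. A single proof handling arbitrary $k,m,g$ satisfying Assumption~\ref{assume:paramagnetic} would be a substantial advance in average-case complexity, so the honest summary of this proposal is that the statistical half is within reach, while the computational half should be viewed as a template whose final step is the open problem that the conjecture itself names.
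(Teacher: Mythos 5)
This statement is a conjecture that the paper does not prove: it is imported from the statistical physics literature (the 1RSB cavity prediction of \citet{krzakala2009hiding}) and is used purely as a hardness assumption in the proof of Proposition~\ref{prop:csp}. Your proposal correctly recognizes this --- the heuristic outline you give (a non-trivial BP fixed point past $D_{\rm cond}$ for the statistical lower bound, local stability of the paramagnetic fixed point below $D_{\rm KS}$ for the failure of BP from random initialization, and the acknowledged gap in upgrading ``BP fails'' to ``all polynomial-time algorithms fail'') matches the standard reasoning behind Definition~\ref{def:thresholds} and the surrounding discussion, and your honest conclusion that this last step is precisely the open content of the conjecture is the right one.
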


\begin{proof}[Proof of Proposition~\ref{prop:csp}]
    At masking fraction $\alpha$ satisfying the bounds in the Proposition, with probability at least $\alpha^N \ge (1 - \gamma^{-1}D_{\rm KS}/N^{k-1})^N \ge \Omega(1)$ we have that all tokens corresponding to latents $x_{\pi(i)}$ get masked. Independently of this, the number of unmasked tokens among the observation tokens $\mathcal{O}_S$ is distributed as $\mathrm{Bin}(N(N-1)\cdots (N-k+1), 1 - \alpha)$, so by standard binomial tail bounds, with constant probability (depending on the gap between $D_{\rm cond}$ and $D_{\rm KS}$) this lies between $\gamma^{-1} D_{\rm cond}N/k$ and $\gamma^{-1} D_{\rm KS}N/k$. Furthermore, of these unmasked tokens in expectation $\gamma$ fraction of them correspond to observations for which the associated predicate evaluates to $1$. Conditioned on the above events, the masking problem thus reduces exactly to inference for a planted constraint satisfaction problem at average degree $D_{\rm cond} < D < D_{\rm KS}$, from which the Proposition follows.
\end{proof}

\section{Experimental details in Section~\ref{sec:hardness}}

\subsection{Experimental details in Section~\ref{sec:hardness_text}} \label{appendix:exp_detail_text}

\paragraph{$\pi$-learner configurations.} We consider two distributions of $\pi$ that interpolate between $\mathrm{Unif\,}(\mathbb{S}_L)$ where $\mathbb{S}_L$ denote the uniform distribution over all permutations of indices $\{0,1, \ldots, L-1\}$ and the point mass at the identical distribution: (Closer) and (Much-closer). To construct those distributions, we start from the identity permutation and perform a certain number of random swapping operations. Since $L\log(L)$ number of swaps results in a distribution that is very close to $\mathrm{Unif\,}(\mathbb{S}_L)$ \cite{bormashenko2011coupling}, we use $L/10$ and $\sqrt{L}$ swaps to construct the (Closer) and (Much-closer) distributions, respectively. For consistency, we repeat this sampling process three times.

\paragraph{Model and training configurations.} As explained in Section~\ref{sec:hardness_text}, to evaluate the scaling law of the $\pi$-learner, we can simply adapt the autoregressive training setup (a transformer with causal attention) by modifying the input to $\pi(x_0)$ and using a learnable positional embedding layer instead of RoPE. We borrow the training configurations from \cite{nie2024scaling}, which are also consistent with the TinyLlama \cite{zhang2024tinyllama} configurations. In particular, we use AdamW optimizer \cite{loshchilov2019decoupled}, setting $\beta_1 = 0.9$, $\beta_2 = 0.95$, and a weight decay of $0.1$ and $L=2048$.
A cosine learning rate schedule is applied, with a maximum learning rate of $4 \times 10^{-4}$ and a minimum learning rate of $4 \times 10^{-5}$. We also note that \textbf{unless otherwise specified, we maintain the same training configuration throughout the paper.}

\paragraph{Examining scaling laws.} We conduct IsoFLOP analysis \cite{hoffmann2022trainingcomputeoptimallargelanguage}. For a given number of FLOPs $C$, by varying the number of non-embedding parameters of transformers, we set the iteration numbers so that the total number of tokens observed by the model during training equals $C/6N$, following prior studies \cite{hoffmann2022trainingcomputeoptimallargelanguage, kaplan2020scaling}. We then select the smallest validation loss and set it as a data point.

\subsection{Experimental details in Section~\ref{sec:imbalance_error}}
\label{appendix:exp_detail_3_3}

\subsubsection{Experiment on L\&O-NAE-SAT distribution} We consider the L\&O-NAE-SAT distribution with $(N,P) = (20,280)$. For each example sequence from L\&O-NAE-SAT, we pad the last $212$ tokens with an additional token value of $2$. We employ a $19$M MDM with RoPE and a maximum sequence length of $512$. Then, this MDM is trained for $2\times 10^3$ iterations. To attain a proxy MDM for the Bayes optimal predictor, we further train it for $5 \times 10^4$ iterations. 

To measure the error across different tasks, we consider the following setup. For each $\ell \in [1, N-1]$, we randomly mask $\ell$ tokens in the latent positions and $\ell \times (P/N)$ tokens in the observed positions. Across all masked prediction positions, $\ell (1 + P/N)$, we measure the error for each position. For certainty, we repeat this process $1000$ times. The result in Figure~\ref{fig:scaling_laws} corresponds to the case when $\ell = 11$, and we observe the same tendency for other values of $\ell$.

\subsubsection{Experiment on text data}
We take a $170$M MDM pretrained with text data for a baseline model. To measure the performance imbalance between likelihood modeling tasks 

\begin{equation*}
    \mathbb{E}_{x_0 \sim p_{\rm{data}}}\left[\sum_{i=0}^{L-1} \log p_\theta \left( x_0^{\pi(i)} \Big| x_0 [\pi\{i,\ldots,L-1\}] \right) \right].
\end{equation*}
As done in the experiments in Section~\ref{sec:hardness_text}, we sample $\pi$s from 
three different distributions: $\mathrm{Unif}(\mathbb{S}_L)$, (Closer), the point mass of identical distribution. For each case, we calculate the expectation over $1024$ samples of $x_0 \sim p_{\rm{data}}$.

\section{Experimental details in Section~\ref{sec:inference}}

\subsection{Experimental details in Section~\ref{subsec:adaptive_inference}}
\label{appendix:exp_detail_inference}

\subsubsection{Experiment on L\&O-NAE-SAT distribution}
We consider five instances of L\&O-NAE-SAT: $(N,P) = (25,275), (30,270), (40,260), (50,250), (100,200)$. For each distribution, we train a 19M MDM and measure the accuracy difference between vanilla inference and adaptive inference using top probability margin.

\subsubsection{Experiment on text data}

\paragraph{Top probability margin sampler with temperature.}
To modify our inference for text data modeling, which does not have a determined answer, we found that adding a certain level of temperature to the oracle is useful. This is because the top probability margin or the top probability often leads to greedy sampling, which harms the diversity (entropy) of the generated samples. Therefore, we consider a variant of the oracle as follows, incorporating a Gaussian noise term $\epsilon$. 
\begin{align*}
    \mathcal{F}(\theta, x_t) = \text{Top } K \left(| p_\theta(x^i = j_1 | x_t) - p_\theta(x^i = j_2 | x_t) | + \epsilon \right).
\end{align*}
Note that this approach has also been employed for unconditional sampling \cite{wang2024diffusion,zheng2024reparameterized}.

\paragraph{Generative perplexity and entropy.} 
We employ a 1.1B MDM pretrained on text data as a baseline. For each sampling step, we unconditionally generate samples using both vanilla and adaptive inference. Next, we calculate the likelihood using LLama2-7B as a baseline large language model. Moreover, we denote the entropy of a generated sample $x$ as  $\sum p_i \log p_i$, where $p_i = \# \{x^i = i \}/L$.

\paragraph{Choice of number of tokens to unmask.} We set the number of tokens to unmask $K$ so that the number of unmasked tokens matches that of vanilla MDM inference in expectation. For an inference transition from step $t$ to $s$, vanilla MDM expects $(\#\text{ mask tokens in the current }x_t)\times\frac{\alpha_s-\alpha_t}{1-\alpha_t}$ unmasked. Accordingly, we choose $K = (\#\text{ mask tokens in the current }x_t)\times\frac{\alpha_s-\alpha_t}{1-\alpha_t}$. This choice keeps the number of revealed tokens balanced throughout inference. Alternatively, one can sample $K$ stochastically from $\mathrm{Binom}(\#\text{ mask tokens in the current }x_t,\frac{\alpha_s-\alpha_t}{1-\alpha_t})$. We found that both the deterministic and stochastic choices of $K$ result in comparable generative perplexity.

This choice of $K$ can be potentially helpful when the network is time-conditioned, since this keeps $(\# \text{mask tokens in the current $x_t$})\approx (1-\alpha_t)\times L$ where $L$ is the max sequence length--matching the marginal that the model saw during training.

\subsection{Experimental details on Sudoku and Zebra puzzles}
\label{appendix:sudoku-zebra-exp-details}

\paragraph{Dataset.} For both Sudoku and Zebra puzzles, we use the dataset provided in \citet{shah2024causal} to train our model. To evaluate our model on the same difficulty tasks, we use the test dataset proposed in \citet{shah2024causal}. This dataset is created by filtering the puzzles from \cite{david_g__radcliffe_2020} that can be solved using a fixed list of 7 strategies. To create a hard dataset to evaluate easy-to-hard generalization, we use the remaining puzzles from \cite{david_g__radcliffe_2020} as they either require a new strategy unseen during the training and/or require backtracking. The hard dataset contains around 1M Sudoku puzzles.

\paragraph{Model, training, and inference.} For the training and inference, we use the codebase of \cite{ye2024beyond} with keeping most of the hyperparameters default given in the codebase. For the Sudoku dataset, we use $6$M GPT-2 model, and for the Zebra dataset, we use $19$M model. We set the learning rate to 0.001 with a batch size of 128 to train the model for 300 epochs. For the inference, we use 50 reverse sampling steps using the appropriate strategy.  Additionally, we add Gumbel noise with a coefficient of 0.5 to the MDM inference oracle $\mathcal{F}$. 

\subsection{Experimental details on LLaDA-8B} \label{sec:llada_detail}

Our evaluation covers two task categories: (i) infilling(HumanEval-Infill and ROCStories) and (ii) instruction–answering (Math). For instruction–answering tasks, we employ a semi-autoregressive sampling strategy, whereas for infilling tasks we retain the non-autoregressive approach. For infilling tasks, the output length is predetermined—matching the size of the masked span—whereas instruction–answering tasks require an explicit length specification. For the latter, we follow the sampling configuration of \cite{nie2025large}.

For HumanEval-Infill, we adopt the problem set introduced by \cite{bavarian2022efficienttraininglanguagemodels}. Each instance is grouped by the span of the masked code—the region the model must infill—into three categories: \textit{single-line}, \textit{multi-line}, and \textit{split}. The task difficulty rises as the length of the masked span increases.

\section{Omitted proofs}
\label{appenix:mdm-equivalent-loss}

\begin{proof}[Proof of Proposition~\ref{prop:mdm_loss}]
We build on Proposition 3.1 from \cite{zheng2024maskeddiffusionmodelssecretly} to obtain the result of Proposition~\ref{prop:mdm_loss}. We first re-state the result from \cite{zheng2024maskeddiffusionmodelssecretly} for the case when the denoising network $p_\theta$ does not depend on the noise-scale $t$ explicitly. Let $x(n)$ be a sequence with $n$ tokens being masked from $x_0$, and $x^i(n)$ denotes the $i^{\textrm{th}}$ token value of the sequence $x(n)$. Let $\Tilde{q}(x(n) | x_0)$ be the probability distribution corresponding to randomly and uniformly masking $n$ tokens of $x_0$. 

\begin{proposition}[Proposition~3.1 of \cite{zheng2024maskeddiffusionmodelssecretly}]
For clean data $x_0$, let $\tilde{q}(x(n)\mid x_0)$ be the discrete forward process that randomly and uniformly masks $n$ tokens of $x_0$.
Suppose the noise schedules $\alpha_t$ satisfies $\alpha_0 = 0$ and $\alpha_1 = 1$. Then, the MDM training loss \eqref{eqn:mdm_loss} can be reformulated as
\begin{align} \label{appendix:prop_pf_1}
    \mathcal{L}_\theta = - \sum_{n=1}^L \;\; \mathop{\mathbb{E}}_{x(n)\sim \tilde{q}(\cdot\mid x_0)}\left[\frac{1}{n}\sum_{\ell:x^\ell(n) = 0} \log p_\theta(x_0^\ell\mid x(n))\right].
\end{align}
\end{proposition}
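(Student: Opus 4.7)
The plan is to work directly from the continuous-time integral form of $\mathcal{L}_\theta$ stated in Section~\ref{sec:2}, since both conclusions reduce to elementary manipulations once the time integral is evaluated. The strategy has two parts: derive the subset-sum identity by Fubini plus a Beta-function integral, then obtain the likelihood bound from Jensen's inequality applied to a $\pi$-learner reformulation.

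First I would expand the inner expectation by conditioning on the mask pattern $M = \{i : x_t^i = \mask\}$. Since the forward kernel $q_{t \mid 0}$ masks each coordinate independently with probability $1-\alpha_t$, we have $\Pr[M = M_0] = (1-\alpha_t)^{|M_0|}\alpha_t^{L-|M_0|}$, and conditionally on $M$ the corrupted input equals $x_0[M]$. Substituting and swapping sum with integral gives
\begin{equation*}
\mathcal{L}_\theta = \sum_{M \subseteq [L],\, i \in M} \left(\int_0^1 \alpha_t'\,(1-\alpha_t)^{|M|-1}\alpha_t^{L-|M|}\,dt\right)\, \mathbb{E}_{x_0}[\log p_\theta(x_0^i \mid x_0[M])].
\end{equation*}
The substitution $u = \alpha_t$ (with $\alpha_0 = 1$, $\alpha_1 = 0$ so the limits reverse) converts the bracketed term into $-\int_0^1 u^{L-|M|}(1-u)^{|M|-1}\,du = -B(L-|M|+1,|M|) = -\frac{(L-|M|)!(|M|-1)!}{L!}$, which equals $-\frac{1}{L\binom{L-1}{|M|-1}}$ by a direct binomial identity. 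This yields the stated equality in~\eqref{eqn:mdm_loss}.

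For the ELBO-type bound, I would recast the subset sum as a uniform average over orderings. For each permutation $\pi$, define $p_\theta^\pi(x_0) \triangleq \prod_{i=0}^{L-1} p_\theta(x_0^{\pi(i)} \mid x_0[\pi\{i,\ldots,L-1\}])$, which is a valid probability distribution by chain rule. A counting argument (each pair $(M,\, j \in M)$ with $|M|=k$ arises from exactly $(L-k)!(k-1)!$ permutations $\pi$ under which $j$ is the first element of $M$ in the order $\pi$) rewrites the equality as $\mathcal{L}_\theta = -\mathbb{E}_{\pi \sim \mathrm{Unif}(\mathbb{S}_L),\, x_0}[\log p_\theta^\pi(x_0)]$. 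Identifying $p_\theta(x_0)$ with the MDM's generative marginal $\mathbb{E}_\pi[p_\theta^\pi(x_0)]$ obtained by random-order unmasking and applying Jensen's inequality to the concave function $\log$ then gives the stated bound relating $\mathcal{L}_\theta$ to $-\mathbb{E}[\log p_\theta(x_0)]$.

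Main obstacle: The two nontrivial computations are the Beta-function evaluation (in particular, tracking that the negative sign from $\alpha_t' < 0$ and the one from reversing the integration limits interact correctly with the outer $-1/L$ factor) and the combinatorial identity matching the coefficient $\frac{(L-|M|)!(|M|-1)!}{L!}$ to the uniform-permutation count $\frac{1}{L!}(L-|M|)!(|M|-1)!$. Both are routine but error-prone. Once the $\pi$-learner decomposition is in hand, the inequality is a one-line Jensen, so the substantive content of the proposition is really concentrated in evaluating the time integral and matching its value to the stated binomial form.
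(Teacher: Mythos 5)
Your proof is correct, but it takes a genuinely different route from the paper: the paper never proves this statement itself \--- it imports it verbatim as Proposition~3.1 of \citet{zheng2024maskeddiffusionmodelssecretly} and only performs the downstream regrouping into \eqref{eqn:mdm_loss} \--- whereas you derive it from first principles by conditioning the continuous-time loss on the realized mask set $M$ and evaluating $\int_0^1 \alpha_t'(1-\alpha_t)^{|M|-1}\alpha_t^{L-|M|}\,dt = -\tfrac{(L-|M|)!\,(|M|-1)!}{L!} = -\tfrac{1}{L\binom{L-1}{|M|-1}}$ via the Beta integral. This lands you directly on the subset-sum form \eqref{eqn:mdm_loss}; to recover the displayed $n$-sum form \eqref{appendix:prop_pf_1} you only need to group mask sets by size $n=|M|$ and use $L\binom{L-1}{n-1}=n\binom{L}{n}$, so the coefficient splits as $\tfrac{1}{n}\cdot\tfrac{1}{\binom{L}{n}}$, the second factor being exactly the uniform law $\tilde q(\cdot\mid x_0)$ over $n$-element mask sets \--- i.e.\ the paper's own regrouping run in reverse. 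Your approach buys self-containedness (it certifies the cited identity rather than assuming it, and it implicitly corrects the statement's hypothesis ``$\alpha_0=0$, $\alpha_1=1$'', which must read $\alpha_0=1$, $\alpha_1=0$ for the limits of your $u=\alpha_t$ substitution to produce the claimed sign) at the cost of the Beta-function bookkeeping, which you have handled correctly. One caution on the portion of your plan that goes beyond this statement: with $p_\theta(x_0)\triangleq\mathbb{E}_\pi[p_\theta^\pi(x_0)]$, Jensen gives $\mathcal{L}_\theta=-\mathbb{E}_{\pi,x_0}[\log p_\theta^\pi(x_0)]\ge -\mathbb{E}_{x_0}[\log p_\theta(x_0)]$, the usual ELBO direction, which is the reverse of the inequality sign printed in Proposition~\ref{prop:mdm_loss}; be explicit about which direction your Jensen step actually yields.
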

To obtain an alternative formulation of \eqref{appendix:prop_pf_1}, we expand the expectation $x(n)\sim \tilde{q}(\cdot\mid x_0)$. Since there are total $L$ positions of $x_0$, we have the probability assigned for each $x(n)$ equals $1/\binom{L}{n}$. Therefore, expanding the above equation with the expectation $x(n)$ and treating $x(n)$ as $x[M]$ for some set $M$ of size $n$, we obtain the result. 
\begin{align*}
      \mathcal{L}_\theta 
      = -\sum_{M \in [L], i \in M} \frac{1}{\binom{L}{|M|}} \cdot \frac{1}{|M|} \log p_\theta(x_0^\ell\mid x[M]).
\end{align*}
\end{proof}

\subsection{Equivalence between the MDM loss and any-order autoregressive loss}
\label{sec:mdm-aoarm}

In this section, we will demonstrate the equivalence for MDM loss and any-order autoregressive loss. In particular, for all $x_0$, we show
\begin{align*}
   -\mathop{\mathbb{E}}_{\pi\sim\mathrm{Unif}(\mathbb{S}_L)}\left[\sum_{j=0}^{L-1} \log p_\theta \left( x_0^{\pi(j)} \Big| x_0 [\pi\{j\},\ldots,\pi \{L-1\}] \right) \right] = -\sum_{ M\subseteq [L],i \in M}\frac{1}{\binom{L}{|M|}} \frac{1}{| M |} \displaystyle   \log p_\theta(x^i_0 | x_0[M]). 
\end{align*}
We now consider $\{\pi(j),\dots,\pi(L-1) \}=M \subseteq [L]$ and $\pi(j) =i $ and count the number of $\pi \in \mathbb S_L$ that induces a specific term $\log p_\theta(x_0^i | x_0[M])$. To induce the term, for a given $M\in [L]$ and $i \in M$, $\pi$ must satisfy
\begin{align*}
    \pi(j) = i, \quad  \{\pi(j),\dots,\pi(L-1) \}=M.
\end{align*}
The number of $\pi$ that satisfies above is $(L-|M|)! \times (|M|-1)!$. Using this and the number of total permutations is $L!$, we obtain the result.
\begin{align*}
&\mathop{\mathbb{E}}_{\pi\sim\mathrm{Unif}(\mathbb{S}_L)}\left[\sum_{j=0}^{L-1} \log p_\theta \left( x_0^{\pi(j)} \Big| x_0 [\pi\{j\},\ldots,\pi \{L-1\}] \right) \right] \\
=&\frac{1}{L!}\sum_{\pi \in \mathrm{Unif}(\mathbb{S}_L) } \sum_{j=0}^{L-1} \log p_\theta \left( x_0^{\pi(j)} \Big| x_0 [\pi\{j\},\ldots,\pi \{L-1\}] \right) \\
=& \frac{1}{L!}\sum_{M \in [L], i \in M} \big[\log p_\theta(x_0^i | x_0[M]) \times(L-1-|M|)! \times (|M|-1)! \big] \\
=& \sum_{M \in [L], i \in M} \frac{1}{\binom{L}{|M|}} \frac{1}{|M|} \log p_\theta(x_0^i | x_0[M]).
\end{align*}
\end{document}